%% file: example_paper.tex
\documentclass{article}

\usepackage{microtype}
\usepackage{graphicx}
\usepackage{subcaption}
\usepackage{booktabs} % for professional tables

\usepackage{hyperref}
\usepackage{xurl}
\usepackage{enumitem}

\usepackage[preprint]{icml2026}

\usepackage{amsmath}
\usepackage{amssymb}
\usepackage{mathtools}
\usepackage{amsthm}

\newcommand{\mt}[1]{{\color{red}{#1}}}
\newcommand{\tue}[1]{{\color{blue}{#1}}}

\usepackage[capitalize,noabbrev]{cleveref}

\theoremstyle{plain}
\newtheorem{theorem}{Theorem}[section]

\theoremstyle{definition}

\theoremstyle{remark}

\usepackage[textsize=tiny]{todonotes}

\usepackage{verbatim}

\icmltitlerunning{Tree SAE: Learning Hierarchical Feature Structures in Sparse Autoencoders}

\begin{document}

\twocolumn[
  % \icmltitle{Hierarchical Feature Learning with Tree-Structured Sparse Autoencoders}
  \icmltitle{Tree SAE: Learning Hierarchical Feature Structures in Sparse Autoencoders}

  % It is OKAY to include author information, even for blind submissions: the
  % style file will automatically remove it for you unless you've provided
  % the [accepted] option to the icml2026 package.

  % List of affiliations: The first argument should be a (short) identifier you
  % will use later to specify author affiliations Academic affiliations
  % should list Department, University, City, Region, Country Industry
  % affiliations should list Company, City, Region, Country

  % You can specify symbols, otherwise they are numbered in order. Ideally, you
  % should not use this facility. Affiliations will be numbered in order of
  % appearance and this is the preferred way.
  \icmlsetsymbol{equal}{*}

  \begin{icmlauthorlist}
    \icmlauthor{Tue M. Cao}{yyy,comp}
    \icmlauthor{Hoang X. Nhat}{comp}
    \icmlauthor{Raed Alharbi}{sch}
    \icmlauthor{Phi Le Nguyen}{yyy}
    \icmlauthor{My T. Thai}{comp}
  \end{icmlauthorlist}

  \icmlaffiliation{yyy}{Hanoi University of Science and Technology, Hanoi, Vietnam}
  \icmlaffiliation{comp}{University of Florida, Florida, USA}
  \icmlaffiliation{sch}{Computer Science Department, Saudi Electronic University}

  \icmlcorrespondingauthor{My T. Thai}{mythai@cise.ufl.edu}

  % You may provide any keywords that you find helpful for describing your
  % paper; these are used to populate the "keywords" metadata in the PDF but
  % will not be shown in the document
  \icmlkeywords{Machine Learning, ICML}

  \vskip 0.3in
]

% this must go after the closing bracket ] following \twocolumn[ ...

% This command actually creates the footnote in the first column listing the
% affiliations and the copyright notice. The command takes one argument, which
% is text to display at the start of the footnote. The \icmlEqualContribution
% command is standard text for equal contribution. Remove it (just {}) if you
% do not need this facility.

% Use ONE of the following lines. DO NOT remove the command.
% If you have no special notice, KEEP empty braces:
\printAffiliationsAndNotice{}  % no special notice (required even if empty)
% Or, if applicable, use the standard equal contribution text:
% \printAffiliationsAndNotice{\icmlEqualContribution}

\begin{abstract}
%Learning hierarchical features in Sparse Autoencoder (SAE) is imperative to explore the inherent hierarchical structure of real world features and avoid problematic feature phenomena such as feature absorption or splitting. 
Learning hierarchical features in Sparse Autoencoders (SAEs) is essential for capturing the structured nature of real-world data and mitigating issues like feature absorption or splitting.
Existing works attempt to identify hierarchical relationships within independent feature sets by relying on
%Existing works propose SAEs with independent features set, claiming to learn hierarchical feature pairs by using 
\textit{activation coverage}, the assumption that child feature should only activate when its parent feature activates. However, we demonstrate that this condition alone is insufficient; that is, it often produces false positives where parent and child concepts are semantically unrelated. %We provide evidence showing that this condition is not sufficient to find true hierarchical pairs where the parent concept may have completely unrelated meaning to the child concept. 
%We propose a new \textit{reconstruction} condition to address this limitation and further propose a novel Tree SAE that incorporate both conditions to learn \textit{hierarchical structure from within} the feature set. 
To address this, we introduce a novel {\em reconstruction condition} that enforces a deeper functional link between hierarchical levels. By combining both activation and reconstruction constraints, we propose the Tree SAE, a model designed to learn hierarchical structures directly from within the feature set.
Our results demonstrate that Tree SAEs significantly surpass the existing SAEs at learning hierarchical pairs while maintaining competitive performance to the state-of-the-art on several key benchmarks. %on hierarchical-related metrics (absorption, splitting, composition), reconstruction quality (downstream cross entropy 
Finally, we demonstrate the practical utility of our Tree SAE in mapping the geometry of child feature subspaces and uncovering the complex hierarchical concept structures encoded within large language models. 
%We show the usefulness of our SAE in learning the geometry of child feature subspace and exploring hierarchical concept structure encoded in language model. 
\end{abstract}

\section{Introduction}
\label{sec: introduction}
\input{sections/1-intro}

\section{Motivation}
\label{sec: motivation}
\input{sections/2-motivation}

\section{Tree Sparse Autoencoder}
\label{sec: methodology}
\input{sections/3-methodology}

\section{Experiments}
\label{sec: experiment}
\input{sections/4-experiment}

\section{Conclusion}
\label{sec: discussion}
\input{sections/5-discussion}

\bibliography{example_paper}
\bibliographystyle{icml2026}

%%%%%%%%%%%%%%%%%%%%%%%%%%%%%%%%%%%%%%%%%%%%%%%%%%%%%%%%%%%%%%%%%%%%%%%%%%%%%%%
%%%%%%%%%%%%%%%%%%%%%%%%%%%%%%%%%%%%%%%%%%%%%%%%%%%%%%%%%%%%%%%%%%%%%%%%%%%%%%%
% APPENDIX
%%%%%%%%%%%%%%%%%%%%%%%%%%%%%%%%%%%%%%%%%%%%%%%%%%%%%%%%%%%%%%%%%%%%%%%%%%%%%%%
%%%%%%%%%%%%%%%%%%%%%%%%%%%%%%%%%%%%%%%%%%%%%%%%%%%%%%%%%%%%%%%%%%%%%%%%%%%%%%%
\newpage
\appendix
\onecolumn

\appendix
\label{sec: appendix}
\input{sections/appendix}

\end{document}

%% file: sections/1-intro.tex
Sparse autoencoders (SAEs) have shown great promise in extracting human-interpretable features from opaque language models, providing valuable insight in understanding the thought process \cite{monosemanticity, cunningham2023sparse, lieberum2024gemma}. Applications include model steering \cite{sae_steering}, tracing thoughts \cite{feature_circuit, transcoder_circuit}, or studying the representation space \cite{geometry, hierarchical_geometry, not_all_feature_linear}. 

%Despite its potential, SAEs suffer from imperfect feature recovery, as they neglect the inherent hierarchical structure of real-world features \cite{absorption, monosemanticity, matryoshka, metasae, mdlsae}. This hampers the studying of feature spaces and model representations~\cite{geometry, refusal_geometry}, especially since there is evidence suggesting that the structure %and organization 
%of concepts in language models are hierarchical~\cite{hierarchical_geometry}. In addition, failure in learning hierarchical features can lead to feature absorption \cite{absorption} where fine-grained features compete with and inhibit coarse-grained features in specific instances, feature splitting \cite{monosemanticity}, and feature composition \cite{metasae, matryoshka} where the SAE fails to learn generalized concept structure and instead learns fragmented or composite concepts. 

However, standard SAEs often struggle with imperfect feature recovery because they treat features as an independent set, neglecting the inherent hierarchical structure of real-world concepts \cite{absorption, mdlsae, matryoshka}. This oversight is particularly problematic given that language models appear to organize concepts hierarchically \cite{hierarchical_geometry}. When SAEs ignore this structure, they fall prey to feature absorption (where fine-grained features inhibit coarse ones), feature splitting \cite{monosemanticity}, and feature composition (where the model learns fragmented or overly complex ``polysemantic" amalgams) \cite{metasae, matryoshka}. More discussion of these undesirable phenomena is provided in Appendix \ref{sec: related works}.

Recent architectures like Matryoshka SAEs \cite{matryoshka} and Matching Pursuit SAEs (MP-SAE) \cite{mp} have introduced multi-level reconstruction and conditional orthogonality, partially addressing feature splitting, absorption and learning hierarchical feature structure.
% While these models reduce splitting and absorption, they still share a fundamental limitation: they lack explicit structure within the feature set. 
While these works claim to capture hierarchical pairs, their method of identifying the pairs rely on
% Existing hierarchical SAEs learn independent features and rely on post-hoc algorithms like Masked Cosine Similarity (MCS) \cite{monosemanticity} to identify parent-child pairs based on activation coverage, the requirement that a child feature activates only when its parent is active.
%post-hoc algorithms such as 
Masked Cosine Similarity (MCS) \cite{monosemanticity}, which is based on \textit{activation coverage}, the requirement that a child feature activates only when its parent is active.

In this work, we demonstrate that activation coverage alone is insufficient to identify the true hierarchy. In particular, we provide evidence that the parent and child features can satisfy this condition while remaining semantically unrelated. We further show that, indeed, previous SAEs \cite{matryoshka, mp, topk} using the MCS algorithm also suffer from this phenomenon. 

%In this paper, we introduce Tree SAE that learns a tree-like concept structure within the feature set. 
% Specifically, we incorporate an allocation matrix that assigns child and parent feature pairs of multiple layers, where we further propose a novel \textit{hierarchical condition} on the activation of the parent and child pairs to foster hierarchical relations. 
%Specifically, we incorporate an allocation matrix that assigns child and parent feature pairs of multiple layers, where we gracefully enforce hierarchical conditions on the parent and child pairs to foster hierarchical relations. 
%Our tree structure can adapt dynamically during training, avoiding wasting capacity on dead features. These new proposals allow Tree SAE to carry an inherent feature hierarchy structure while being flexibly adaptable to learn new structures. Our extensive results suggest that Tree SAEs are comparable or even better than the current state-of-the-art (SOTA) SAE \cite{matryoshka} on hierarchical-related metrics, including feature splitting, feature absorption, and feature composition; and at reconstruction loss and scaling. In addition, we also propose a new definition for hierarchical pair based on the limitation of the current activation coverage definition, and a new metric to measure the ability to learn the hierarchical structure of the SAEs.  We show that while other SAEs suffer from the drawback of activation coverage, Tree SAE can learn significantly more hierarchical pairs. We summarise the contributions as follows:

To solve this, we propose \textit{reconstruction condition}, which requires parent and child decoder vectors to align with the representation of child feature's concept (the concept represented by the child feature), ensuring both features to share the same general meaning. 
Furthermore, to identify hierarchical pairs, we introduce a novel Tree SAE that contains tree-like structure directly within feature set. In particular, we divide the feature set into multiple sets with different privilege layers, each layer is incorporated with an allocation vector that assigns child features from higher privilege layers to features in the current layer. 
The parent and child pairs in our tree are designed to obey both activation coverage and reconstruction condition, ensuring their hierarchical relationship. 
%To accomplish this, we require that child features activate only if their parents activate, as well as, features at each privilege layer and their ancestors reconstruct the original input.
We further pair the tree structure with a dynamic allocation mechanism, which allows Tree SAE to assign parent-child pairs during training, fostering the exploration of hierarchical concept structure.
% and enforces a novel reconstruction condition to ensure functional and semantic alignment.
 Our contributions are summarized as follows:
\begin{itemize}[noitemsep, topsep=0pt, parsep=0pt, partopsep=0pt]
    \item We provide concrete evidence showing that activation coverage, the current definition of parent and child relation,  finds semantically incoherent hierarchical feature pairs. We then suggest a new definition, that includes both activation coverage and our new reconstruction condition; and a new metric to benchmark the ability of the SAEs to learn hierarchical concepts.
    \item We propose a novel Tree SAE that encodes, instead of independent features, a hierarchical structure within the feature set that is dynamically learned during training.
    \item The results indicate that Tree SAEs do not suffer from the shortcoming of activation coverage and are able to learn significantly more coherent parent and child pairs than existing SAEs. Furthermore, our SAEs match or exceed current state-of-the-art (SOTA) SAE on hierarchical-related metrics (absorption, splitting, composition) as well as reconstruction loss and scaling, while maintaining high interpretability.
    \item We further show the usefulness of explicit hierarchical structure in learning the geometry of the child concept subspace, and in exploring how language models decompose concepts into multiple granular levels.
\end{itemize} 

% \mt{Tue: space and time are important. Do not keep repeating things. Also show the novelty. If the technical solution is very fancy, usually people use the introduction to discuss about it, and use the "Our contribution are as follows as a short summary. If there is nothing to show novelty, several paragraphs in the solution part (of the Introduction) and the summary become redundant}
% of language modelsWide a of SAEs have been discovered,ing of the modelAlthough successful, often due to the sparsity training objective while ing limitation ongoing process of of the model in learning feature hierarchical representation , missing valuable insights on how the model represents the concepts and thoughts optimizing for sparsity and ingto structure problematic phenomena such as the generalized feature (parent feature) systematically fails to activate on specific instances due to being overtaken of specialized feature (child feature);or with other 

%% file: sections/2-motivation.tex
To identify hierarchical features in existing SAEs, a common approach is MCS \cite{monosemanticity, matryoshka} where the parent activation covers the child feature activation, termed as  \textit{activation coverage} condition (formal definition in Section \ref{sec: preliminary}). However, we show evidence that this condition is not sufficient to find hierarchical feature pairs (Section \ref{sec: activation coverage fail}) and provide a stronger condition to address this shortcoming (Section \ref{sec: reconstruction condition}).

\subsection{Preliminary}
\label{sec: preliminary}

\begin{figure*}
\begin{center}
\includegraphics[width=0.9\textwidth]{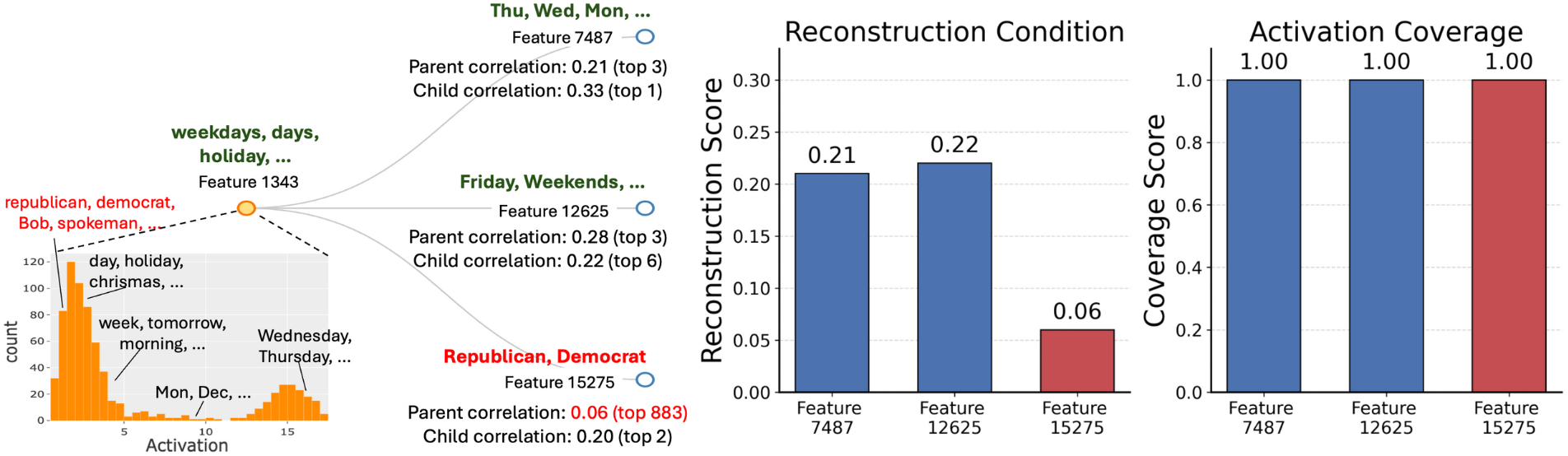} 
\end{center}
\caption{The correlation between non-dense parent feature vector (feature 1343 - Tree SAE 2 layers $L_0=32$) and 3 probe weights trained to detect the activation of 3 child features (found by activation coverage condition). Child feature 15275, representing a different concept from the rest, corresponding to a spurious low activation value of the parent feature. It has significantly lower parent correlation while having perfect activation coverage. In contrast, our reconstruction condition scores correctly identify the unrelated child feature.} 
\vspace{-3mm}
\label{fig: non dense feature evidence}
\end{figure*}

\textbf{Sparse Autoencoder:} The standard SAE architecture consists of a linear encoder and decoder with a sparse activation to ensure the sparsity and interpretability \cite{monosemanticity, cunningham2023sparse}. Specifically, the encoder weights $\mathbf{W}^e \in \mathbb{R}^{d_f \times d_m}$ and the non-linear sparse activation function $\sigma$ maps the hidden latent of the model $\mathbf{x} \in \mathbb{R}^{d_m}$ from the original $\mathbb{R}^{d_m}$ latent space into a higher-dimensional overcomplete feature space \cite{monosemanticity} $\mathbb{R}^{d_f}$: 
\begin{equation}
    \mathbf{f}(\mathbf{x}) = \sigma (\mathbf{W}^e (\mathbf{x} - \mathbf{b})),
\end{equation}
where $\mathbf{b} \in \mathbb{R}^{d_m}$ is the autoencoder bias. The activation function enforces sparse non-negative activation where element $f_i$ in $\mathbf{f}(\mathbf{x})$ is the activation of feature $i$ and $f_i(\mathbf{x})\geq  0 \: \forall i$ and we define $L_0$ of a SAE as the average number of active features over a dataset of model hidden latent. The decoder maps the sparse features back to the original space, providing a reconstruction of the original latent:
\begin{equation}
    \hat{\mathbf{x}} = \mathbf{W}^d \mathbf{f}(\mathbf{x}) + \mathbf{b},
\end{equation}
where $\mathbf{W}^d \in \mathbb{R}^{d_m \times d_f}$ is the decoder weights. The SAE is trained with Mean Squared Error (MSE) loss in addition to an auxiliary loss \cite{topk} scaled with parameter $\alpha$ to avoid inactive features:
\begin{equation}
    \mathcal{L}(\mathbf{x}) = || \hat{\mathbf{x}} - \mathbf{x}||^2_2 + \alpha \mathcal{L}_{aux}(\mathbf{x}).
\end{equation}
This setup allows the SAE to learn in an unsupervised manner while extracting a sparse set of interpretable features from the original latent $\mathbf{x}$.

\textbf{Activation coverage for hierarchical feature pair:} %Previous research \cite{matryoshka, mp, monosemanticity} identifies hierarchical feature pairs primarily through activation coverage. 
Under the activation coverage condition, a feature $f_p$ is considered a potential parent of a child feature $f_c$ if $f_p$ activates on a large proportion of the instances where $f_c$ is active. Formally, the activation coverage score $S_{cov}$ is defined as:

\begin{equation}\label{eq: activation coverage}
S_{cov}(f_p, f_c) = \frac{\sum_\mathbf{x} \mathbf{1}(f_p(\mathbf{x}) > 0 \cap f_c(\mathbf{x}) > 0)}{\sum_\mathbf{x} \mathbf{1}(f_c(\mathbf{x}) > 0)}
\end{equation}

where $\mathbf{1(\cdot)}$ is the function that returns 1 if the input value is true, otherwise returns 0. A hierarchical relationship is inferred when $S_{cov} > \tau_{cov}$, where $\tau_{cov}$ is a pre-defined threshold \cite{matryoshka, monosemanticity}. Intuitively, this condition requires the parent to represent a broad concept that encompasses the more specific semantic scope of the child. 

However, we argue that this definition is prone to identifying overly-broad parent concepts, leading to semantically incoherent pairs. A densely activated or highly polysemantic parent feature may achieve a high coverage score across many children without sharing any underlying functional or semantic relationship with them. More critically, we demonstrate cases where a child feature activates only on ``spurious" instances which characterized by low parent activation that bear no meaningful connection to the supposed parent concept (see Figure \ref{fig: non dense feature evidence} in Section \ref{sec: activation coverage fail}). This suggests that activation overlap is a necessary but insufficient condition for establishing a true hierarchy.

\subsection{Cases Where Activation Coverage Fails} 
\label{sec: activation coverage fail}

We observe that activation coverage identifies spurious hierarchical pairs across many SAEs (more evidence in Appendix \ref{sec: exp activation coverage fail}). We first investigate a common dense feature called ``PCA feature" found by \cite{dense_not_bug}, of which the decoder vector has high correlation with the top-1 PCA vector of the activation space.
The PCA feature we study, feature 3098 of the 4-layer Tree SAE $L_0 = 32$, which activates on a large proportion of tokens (20\% of the test tokens in our example), does not appear to encode any interpretable meaning when analyzed at the token level, similar to the observation in \cite{dense_not_bug}. As a result of dense activation, most non-dense child features that can be interpreted through token activations have a perfect activation coverage score with the dense parent PCA feature. However, the parent and child concepts have no shared meaning due to the uninterpretability of the parent feature.

An additional example demonstrating the failure of activation coverage to identify semantically meaningful hierarchical relationships, even in the presence of non-dense parent features, is illustrated in Figure \ref{fig: non dense feature evidence}. We examine parent feature 1343 from the Tree SAE (2 layers, $L_0=32$), which primarily activates on ``time" contexts such as \textit{``weekdays"}, \textit{``today"}, and \textit{``morning"}. Although the activation coverage metric identifies features 7487, 12625, and 15275 as children (shown in Figure \ref{fig: non dense feature evidence}), a qualitative inspection reveals significant semantic divergence. While features 7487 and 12625 activate in a similar context as the parent feature but are decomposed into specialised sub-cases; feature 15275 activates on political tokens, such as \textit{``Republican''} and \textit{``Democrat''}, that are entirely unrelated to the concept of the parent feature. As further evidenced by the broader analysis across all SAEs provided in Appendix \ref{sec: exp activation coverage fail}, these results suggest that relying exclusively on activation coverage often yields parent-child pairings that lack conceptual coherence.%, thereby undermining the interpretability of the learned hierarchy.

\section{Our Proposed Reconstruction Condition}
\label{sec: reconstruction condition}
%To address the limitations of activation coverage, 
We introduce a reconstruction condition designed to enforce stricter semantic alignment within hierarchical feature pairs. Formally, for a parent feature $f_p$ and a potential child feature $f_c$, the reconstruction score $S_{res}(f_p, f_c)$ is defined as:
\begin{equation}
    \label{eq: reconstruction condition}
    S_{res}(f_p, f_c) = \min((\mathbf{d}^*_c)^T \mathbf{d}_c, (\mathbf{d}^*_c)^T \mathbf{d}_p),
\end{equation}
where $\mathbf{d}_c, \mathbf{d}_p \in \mathbb{R}^{d_m}$ denote the decoder vectors corresponding to $f_c$ and $f_p$, respectively; and $\mathbf{d}^*_c$ represents the true concept vectors in activation space associated with the of $f_c$.
Intuitively, a valid hierarchical relationship requires that both parent and child features contribute constructively to the reconstruction of the child concept. This ensures that the feature vector of the parent captures broad conceptual direction, which the child’s vector then further refines toward a more specialized sub-concept. Consequently, we argue that robust identification of hierarchical pairs requires the simultaneous satisfaction of two criteria: activation coverage and the reconstruction condition. While activation coverage ensures the parent concept is sufficiently broad to encompass the child’s occurrences, the reconstruction condition enforces semantic alignment.

% Intuitively, we want both the parent and child features in reconstructing the child concept. This ensures that the feature vector of the parent actively represents a general concept direction, and the child's vector further refines the direction toward the specialized child concept. We argue that to determine hierarchical feature pairs, we need the both conditions: \textit{activation coverage} and \textit{reconstruction}. Specifically, the \textit{activation coverage} ensures the parent concept is broad enough to cover the child concept, while the \textit{reconstruction condition} tightens the concept between parent and child features, avoiding the problem of unrelatedness from activation coverage.
% Specifically, given the true concept vector of the child concept represented by the activation of a candidate child feature $f_c$: $\mathbf{d}^*_c \in \mathbb{R}^{d_m}$, we define the \textit{reconstruction condition} of $f_c$ with respect to a parent feature $f_p$ as:
% \begin{equation}
%     \label{eq: reconstruction condition}
%     S_{res}(f_p, f_c) = min((\mathbf{d}^*_c)^T \mathbf{d}_c, (\mathbf{d}^*_c)^T \mathbf{d}_p),
% \end{equation}
% where $\mathbf{d}_c$ and $\mathbf{d}_p$ are the decoder vectors of the child and parent features, respectively.
% Now, come back to the failure casess 
% examples described in the previous 

Revisiting the failure cases of activation coverage identified in the previous section, we now demonstrate that our proposed reconstruction condition effectively eliminates these spurious associations. To evaluate the \textit{``PCA feature"} case, we adopt the methodology of \cite{absorption}, training a linear probe to distinguish the activations of a child feature from all other tokens. The resulting weight vector of the linear probe is utilized as the ground-truth concept direction, $\mathbf{d}^*_c$ (we provide further discussion on why we need to train probe following previous works to find $\mathbf{d}^*_c$ in Appendix \ref{sec: reconstruction loss and activation coverage improves reconstruction score}). For the initial \textit{``PCA feature"} example, we identify the top 10 features with the highest activation coverage scores and calculate the cosine similarity between both parent and child features relative to this true direction. As illustrated in Figure \ref{fig: dense feature evidence}, all identified child features exhibit a notably low correlation with the \textit{``PCA feature''}, ranking only within the top 20k features by correlation. This confirms that the reconstruction condition correctly reflects the semantic vacancy of the parent feature relative to its putative children.

The efficacy of the reconstruction score is further validated by the case of feature 1343 in Figure \ref{fig: non dense feature evidence}. While features 7487 and 12625 exhibit high reconstruction scores relative to the parent, feature 15275 yields a significantly lower value. This discrepancy demonstrates that the reconstruction condition effectively filters out spurious pairings where parent activations are either incidental or semantically divergent from the core concept. Specifically, for feature 15275, the diminished score captures a fundamental lack of conceptual correlation, correctly identifying it as unrelated to the parent’s temporal domain. In contrast, features 7487 and 12625 maintain some of the highest parent-child correlations in the dataset, resulting in robust reconstruction scores. This confirms that our metric successfully isolates features that serve as specialized refinements of a broader parent concept.%, ensuring that hierarchical associations are grounded in genuine semantic building blocks rather than mere co-occurrence.

Beyond a few examples, we show that, in general, strong hierarchical pairs have a higher reconstruction score than weaker pairs in Appendix \ref{sec: exp activation coverage fail}, suggesting that it is natural for hierarchical pairs to follow this condition. 
We also show why both conditions are needed in Appendix \ref{sec: necessity of both}.

% Lastly, in our experiments (Section \ref{exp: hierarchy}), we show that applying activation coverage on the existing SAEs leads to a poor relation between parent and child feature for most of the identified hierarchical pairs.

%% file: sections/3-methodology.tex
% Motivated by the limitation of activation coverage condition and the lack of inherent structure, which relies on activation coverage algorithms such as MCS, we develop a novel Tree SAE that directly incorporates hierarchical structure into the feature set and incentivises hierarchical pairs to follow both conditions.
In this section, we propose our  Tree SAE, which is the first SAE to incorporate hierarchical tree structure within the feature set, allowing direct exploration of real-world concept structure. 
% We gracefully incorporate both \textit{activation coverage} (Section \ref{sec: tree structure in sae}) and \textit{reconstruction} condition (Section \ref{sec: training objective}) into the SAE to incentivize hierarchical pairs. We further propose dynamic allocation of child features (Section \ref{sec: dynamic alloc}), which can be efficiently solved using a greedy algorithm, enabling dynamic adaptation to learn new feature structure during training.
Suppose that the Tree SAE has $L$ layers, we define our proposed tree structure as follows:

% \subsection{Tree Structure in Sparse Autoencoder}
% \label{sec: tree structure in sae}

\begin{figure*}[tb]
\begin{center}
\includegraphics[width=0.9\textwidth]{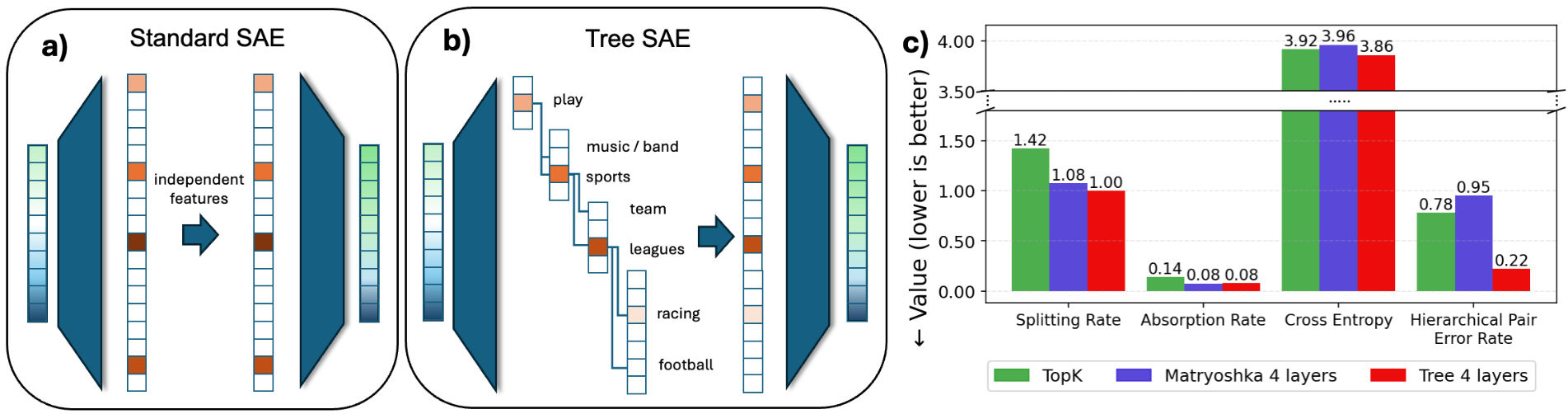} 
\end{center}
\caption{\textbf{(a) and (b)} Figure illustrates the difference between other SAE architectures and our Tree SAE. The Tree SAE learns parent and child feature pairs over multiple layers, while other SAEs learn an independent feature set. \textbf{(c)} The results of the baseline Top-$k$, 4 layers Matryoshka, 4 layers Tree SAE at $L_0=80$ on feature splitting, absorption, downstream cross entropy loss, and hierarchical pair error rate (Section \ref{exp: hierarchy}) respectively. Tree SAEs perform equally well or outperform other SAEs at reconstruction and hierarchical-related metrics.} 
\vspace{-4mm}
\label{fig: main_tree_sae}
\end{figure*}
\noindent \textbf{Tree Structure.} 
We design the Tree SAE architecture to incorporate multiple privilege layers, organized such that a feature at a higher layer are eligible to be assigned as a descendant of a feature at any lower layer (see Figure \ref{fig: main_tree_sae}). We define an allocation vector $\mathbf{a}_{l} \in \mathbb{N}^{s_l}$, which contains the indices of the parent features of the features at layer $l$:
$\mathbf{a}_{l} := \{a_{l, 1}, \dots, a_{l, s_l}\}^{T},$
where $s_l$ is the number of features at layer $l$, and $a_{l, i}$ denotes the parent feature of the $i$-th feature. 
To provide a consistent baseline for features without an explicit parent, we introduce an imaginary root node at layer 0 ($s_0=1$). 
This formulation allows Tree SAE to directly encode tree structure to find parent and child pairs, unlike existing SAEs \cite{matryoshka, mp, topk} that have no structure within the feature set. In the following, we detail our approach in incorporating both activation coverage and reconstruction condition and determining the optimal allocation vector.
% This process is designed to ensure that the resulting hierarchy simultaneously satisfies our two primary criteria: activation coverage and the reconstruction condition.

% Let $\mathbf{a}_j \in \mathbb{N}^{s_j}$ be the parent and child allocation vector, and $s_j$ is the number of features at layer $j$. The allocation vector $\mathbf{a}_j$ contains integer elements of the index of the parent feature at all of the previous layers $0 \xrightarrow{} j-1$ where a feature $i$ at privilege layer $j$ is allocated with the parent that has the index $a_{j,i}$. We set an imaginary root node at layer 0 to indicate no parent. 
% At initialisation, we randomly assign the child features to the parents, which allows a parent feature to have multiple child features. During training, we propose a dynamical allocation method to allocate child features to new parents to avoid dead features in Section \ref{sec: dynamic alloc}, flexibly adapting our tree structure to avoid wasting capacity on useless features. 

\noindent \textbf{Combining Activation Coverage with Tree Structure.} 
To foster the parent and child relationship between the pair, we enforce the activation coverage condition on the activation of the features. In particular, it is required that the child feature activates only if the parent feature activates, strictly follows the allocation vectors $\{\mathbf{a}_{l}\}_{l=1}^L$.
%( $\forall l \in \{1, \dots, L\}\}$.
For a feature $f_i$ at layer $l$ with a corresponding parent index $a_{l,i}$, we define the regularized activation $f^*_i(\mathbf{x})$ as follows:
\begin{equation}
    f^*_{i}(\mathbf{x}) = \begin{cases}
        f_{i}(\mathbf{x}) & \text{if} ~ a_{l,i} \in \text{layer 0} \\
        f_{i}(\mathbf{x}) \cdot \mathbf{1}(f^*_{a_{l,i}}(\mathbf{x}) > 0) 
       & \text{otherwise}
     \end{cases} \: ,
     \label{eq: enforce activation coverage}
\end{equation}
where $\mathbf{1}(.)$ is is the function that returns 1 if the input value is true, otherwise returns $0$.
The term $f^*_i(\mathbf{x})$ is subsequently utilized as the final activation for feature $f_i$ given input $\mathbf{x}$.

%, and $f^*_{i}(\mathbf{x})$ is the final activation of feature $i$ after applying activation coverage condition.
% This formulation allows Tree SAE to directly encode tree structure to find parent and child pairs, unlike existing SAEs \cite{matryoshka, mp, topk} that have no relation within the feature set.  
% We show that this tree structure is crucial to explore hierarchical feature pairs that follow both activation coverage and reconstruction condition, while other SAEs with the MCS algorithm fail (Section \ref{exp: hierarchy}).

% At initialization, we randomly assign the child features to the parents, which allows a parent feature to have multiple child features. 
% While during training, we propose a dynamical allocation method to allocate child features to new parents in Section \ref{sec: dynamic alloc}, flexibly adapting our tree to explore more hierarchical structure and avoiding dead child features.

\noindent \textbf{Ensuring Reconstruction Condition via Training Loss.}
% To incorporate \textit{reconstruction} condition, the training loss of Tree SAE enforces all privilege layers to reconstruct the input hidden latent, which encourages both parent and child features to jointly represent the concepts in the latent on multiple layers. Let $\hat{\mathbf{x}}_t$ be the reconstruction of features with privilege layer $t$, the reconstruction loss is defined as:
% The reconstruction condition enforces both parent and child features to simultaneously construct the concept vector. To incorporate this condition, given a privilege layer $l$, we establish that the sum of the feature reconstruction $\hat{\mathbf{x}}_t$ of all of the lower layers $t$ including $l$ (i.e., $t \leq l$) to construct the input. Concretely, we define the reconstruction loss as:
The reconstruction condition requires that both parent and child features contribute simultaneously to the formation of the child concept vector. To operationalize the criteria established in Equation (7), we utilize a multi-level reconstruction loss objective. 
% The primary intuition behind this loss is to constrain the feature reconstruction $\hat{\mathbf{x}}_t$ at each layer $t$ to accurately recover the input signal. 
% To faithfully implement the hierarchical objective, where parent feature vectors capture broad conceptual structures and child vectors subsequently refine these representations into specialized concepts, we calculate the reconstruction loss layer-wise. 
Specifically, let $\hat{\mathbf{x}}_l$ be the reconstruction of features in layer $l$,
% For each layer $l$, the loss is defined as the cumulative reconstruction error between $\hat{\mathbf{x}}_l$ and the reconstruction across all preceding layers $t$, such that $t \le l$. 
this loss constrains the cumulative reconstruction of all preceding layers, including $l$: $\sum_{t=1}^l\hat{\mathbf{x}}_t$ to accurately recover the input $\mathbf{x}$. 
% This incremental approach ensures that each level of the hierarchy maintains fidelity to the input while progressively narrowing the conceptual focus. 
This approach ensures that parent feature vectors capture general-level concepts in the input and child vectors subsequently refine these representations into specialized concepts, following the reconstruction condition. Concretely, we define the reconstruction loss as:
\begin{equation}
    \label{eq: multilevel_reconstruction}
    \mathcal{L}_{recons}(\mathbf{x}) = \sum_{l=1}^L || \sum_{t=1}^l \hat{\mathbf{x}}_t - \mathbf{x} ||^2_2.
\end{equation}
% This encourages the parent feature vectors to represent the input, capturing the broad concepts, then the child vectors refine the representation further to learn specialized concepts, following the reconstruction condition.

Notably, we prove that this loss when pairs with our tree structure (Equation \ref{eq: enforce activation coverage}) leads to reconstruction score increase in Appendix \ref{sec: reconstruction loss and activation coverage improves reconstruction score}. Besides the reconstruction condition, we also adopt the auxiliary loss that reduce the dead feature rate. Different from existing approaches in the literature, instead of applying the auxiliary loss over all of the features, we apply the auxiliary loss on each privilege layer, ensuring that the dead child feature reconstructs the error of their parents, obeying the reconstruction condition. 
We use the auxiliary loss proposed by \cite{topk}, which selects the top-$k$ dead features with the highest pre-activation value to reconstruct the residual error of the SAE. Let $\hat{\mathbf{e}}_l$ be the reconstruction of the residual error at privilege layer \tue{$l$}. The auxiliary loss at layer $l$ is defined as:
\begin{equation}
    \mathcal{L}^l_{aux}(\mathbf{x}) = ||\hat{\mathbf{e}}_l + \sum_{t=1}^l \hat{\mathbf{x}}_t - \mathbf{x}||^2_2.
\end{equation}
Let $\alpha_l$ be the scaling factor of $\mathcal{L}^l_{aux}$. The full training loss of our Tree SAE can then be defined as follows:
\begin{equation}
    \mathcal{L}_{tree}(\mathbf{x}) = \mathcal{L}_{recons}(\mathbf{x}) + \sum_{l=1}^L \alpha_l \mathcal{L}^l_{aux}(\mathbf{x}).
\end{equation} 

% \subsection{Dynamic Feature Allocation}
% \label{sec: dynamic alloc}
\textbf{Obtaining Optimal Feature Allocation.} Although a fixed tree structure provides hierarchy in our SAE, it can have suboptimal allocation where child features are assigned to inactive parent features, leading to a high dead feature ratio. While existing methods employ auxiliary loss \cite{topk} to mitigate the dead feature problem, there are no mechanisms to reallocate the child features; therefore only partially address this problem in Tree SAE.
We propose a dynamic allocation method that allow features reallocation, which can be paired with existing auxiliary loss to enhance the reduction of the dead feature rate, details of this experiment are in Appendix \ref{sec: dynamic allocation detail}. 

The flow of our algorithm is in Algorithm \ref{alg: full algo}, where for each layer, we first find the optimal number of child features (at the current layer) assigned for parent features (at lower layers), then we reallocate \textit{dead} child features at the current layer to best match optimal numbers of children. Specifically, let $m_l=\sum_{t=0}^{l-1}s_t$ be the number of candidate parent features (at layer $t \leq l$), and $\mathbf{k}_l \in \mathbb{N}^{m_l}$ be the vector containing the numbers of children feature (at layer $l$) assigned for the parent features.
%, our aim is to find the ``optimal" $\mathbf{k}^*_l$. 
Consider a parent feature $f_p$, we define its capacity $C_p$, where a higher capacity means it can be assigned to more children. We assume that assigning $k_{l, p}$ children to feature $f_p$ means that each child's feature of $f_p$ has the payoff of $P_p=C_p/k_{l, p}$. We consider the payoff as the potential to activate (receive a gradient) of the child feature. Thus, to avoid dead features, we determine the \textit{``optimal"} value $\mathbf{k}^*_{l}$ of $\mathbf{k}_{l}$, which maximizes the minimum payoff for all parent features:
\begin{equation}
    \label{eq: dynamic alloc}
    \mathbf{k}^*_{l} = \arg \max_{\mathbf{k}_{l}} \left(\min_{p| k_{l, p} > 0} \left(\frac{C_p}{k_{l,p}} \right) \right), \: \textbf{s.t.} \: \sum_{p=1}^{m_l}k_{l, p}=s_l.
\end{equation}
Our algorithm for determining $\mathbf{k}^*_{l}$ is designed based on the following theorem.
\begin{theorem}
\label{theorem: threshold feasibility}
For any $\tau > 0$, there exists an allocation $\mathbf{a}_l$ with $k_{l, 1}, \dotsc, k_{l, m_l} \in \mathbb{Z}_{\geq 0}$, $\sum_p k_{l,p} = s_l$, such that $\min_{p|k_{l, p} > 0} (C_p / k_{l, p}) \geq \tau$ if and only if $\sum_{p=1}^{m_l} \left\lfloor \frac{C_p}{\tau} \right\rfloor \geq s_l.$
\end{theorem}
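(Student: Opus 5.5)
The proof is a direct two-direction argument. The one fact doing all the work is this: for a nonnegative integer $k$, $k>0$ and $C_p/k \geq \tau$ together are equivalent to $1 \le k \le \lfloor C_p/\tau \rfloor$. This holds because $C_p/k \ge \tau \iff k \le C_p/\tau$ when $k>0$ and $\tau>0$, and $k$ is an integer. I will assume, as the setting implies, that capacities are nonnegative, so that $\lfloor C_p/\tau\rfloor \ge 0$.

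\emph{Necessity.} Suppose the allocation has counts $k_{l,1},\dots,k_{l,m_l}$ with $\sum_p k_{l,p}=s_l$ and $\min_{p\mid k_{l,p}>0} C_p/k_{l,p}\ge\tau$. For every $p$ with $k_{l,p}>0$ we get $C_p/k_{l,p}\ge\tau$, so $k_{l,p}\le C_p/\tau$. Since $k_{l,p}$ is an integer, this gives $k_{l,p}\le\lfloor C_p/\tau\rfloor$. For $p$ with $k_{l,p}=0$, the bound $0\le\lfloor C_p/\tau\rfloor$ holds trivially. Summing over $p$ yields $s_l=\sum_p k_{l,p}\le\sum_p\lfloor C_p/\tau\rfloor$.

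\emph{Sufficiency.} Let $u_p:=\lfloor C_p/\tau\rfloor\ge 0$ and assume $\sum_p u_p\ge s_l$. I will pick any integers $0\le k_{l,p}\le u_p$ with $\sum_p k_{l,p}=s_l$. One way is greedy: go through the parents in order and set $k_{l,p}=\min\bigl(u_p,\ s_l-\sum_{q<p}k_{l,q}\bigr)$. This reaches exactly $s_l$ because the caps sum to at least $s_l$. For each $p$ with $k_{l,p}>0$ we have $k_{l,p}\le u_p\le C_p/\tau$, so $C_p/k_{l,p}\ge\tau$, and the minimum over the active parents is therefore at least $\tau$. Finally, I turn the counts into an allocation vector $\mathbf{a}_l$ by assigning $k_{l,p}$ of the $s_l$ layer-$l$ features to parent $p$, for example consecutively. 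This is always possible because the counts sum to $s_l$.

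I do not expect a real obstacle. The only points needing care are the edge cases. The first is $s_l=0$: the minimum over an empty set is conventionally $+\infty$, and both sides of the equivalence hold. The second is the implicit assumptions $C_p\ge0$ and $\tau>0$, which make the floors nonnegative integers so that choosing $k_{l,p}=0$ is always allowed. I would state these conventions explicitly at the start.
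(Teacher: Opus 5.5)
Your proof is correct and matches the paper's argument: necessity by bounding each active $k_{l,p}\le\lfloor C_p/\tau\rfloor$ and summing, and sufficiency by greedily filling parents up to the caps $\lfloor C_p/\tau\rfloor$. Your explicit assumption $C_p\ge 0$ is a worthwhile addition, since the paper's remark that the bound ``also holds'' when $k_{l,p}=0$ relies on it without saying so; it holds in the paper's setting, where capacities are accumulated losses.
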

% The complete proof for the following theorem is provided in Appendix \ref{sec: dynamic allocation detail}. 
This theorem establishes that an allocation $\mathbf{a}_l$ with a minimum payoff exceeding $\tau$ exists if and only if: $\sum_{p=1}^{m_l} \left\lfloor \frac{C_p}{\tau} \right\rfloor \geq s_l$. Building on this theorem, we propose a greedy algorithm (Algorithm \ref{alg: dynamic alloc}) to determine the supremum of $\tau$, which allows us to derive the optimal $\mathbf{k}^*_{l}$. Our algorithm achieves an efficient computational complexity of $O(s_l \log m_l)$. All proofs are detailed in Appendix \ref{sec: dynamic allocation detail}.

% We leave the proof in the Appendix \ref{sec: dynamic allocation detail}. 
% This theorem states that there exists an allocation $\mathbf{a}_l$ with minimum pay off greater than $\tau$ if and only if $\sum_{p=1}^{m_l} \left\lfloor \frac{C_p}{\tau} \right\rfloor \geq s_l$. We propose a greedy algorithm (Algorithm \ref{alg: dynamic alloc} in Appendix), to determine the supremum of $\tau$, thereby obtaining the $\mathbf{k}^*_{l}$. Our algorithm has the complexity of  $O(s_l\log(m_l))$ (proof provided in Appendix \ref{sec: dynamic allocation detail}). 
%optimal payoff by finding the supremum of $\tau$ efficiently in $O(s_l\log(m_l))$ steps, proofs are in Appendix \ref{sec: dynamic allocation detail}. 
% We provide the proof of the theorem and that Equation (\ref{eq: dynamic alloc}) can be solved using an efficient greedy algorithm in $O(s_l\log(m_l))$ steps in Appendix \ref{sec: dynamic allocation detail}. 
In our experiment, we chose the capacity as the accumulated training loss. The intuition is that having higher accumulated loss means either the features activate more often, allowing the child features to have more chance to receive gradient updates, or having a poor reconstruction quality, indicating that they may need child features to learn the remaining error. For each of the parent features during the training, we add the batch loss to the running sum of the parent feature for each instance in the batch that it activates on. After a predefined number of batches $T$, we solve the problem of Equation (\ref{eq: dynamic alloc}) for all of the layers. We then sample the \textit{dead features} at each privilege layer (usually defined as a feature without any activation in the most recent 10M tokens \cite{topk, cunningham2023sparse}), then dynamically reallocate the dead child features to maximise the payoff. Details about the allocation algorithm are provided in Appendix \ref{sec: dynamic allocation detail}.

%% file: sections/4-experiment.tex
% We evaluate our Tree SAE on hierarchical-related metrics, including feature splitting (Section \ref{exp: absorption and splitting}), feature absorption (Section \ref{exp: absorption and splitting}), and feature composition (Section \ref{exp: composition}), showing that Tree SAEs are not susceptible to existing hierarchical feature pathologies.
% We then investigate the ability to learn and identify hierarchical structure, based on both activation coverage and reconstruction condition, where Tree SAEs discover significantly more hierarchical pairs compared to existing SAEs in Section \ref{exp: hierarchy}.
% Furthermore, we conduct quality of reconstruction, cross entropy explained in Section \ref{exp: reconstruction loss}, and interpretability experiments in Section \ref{exp: interp}. 
% Lastly, following previous state-of-the-art \cite{matryoshka}, we test the scaling of Tree SAEs on all of the mentioned metrics in Section \ref{exp: scalling}. The overall results show that Tree SAE can reach a stronger reconstruction loss than Top-$k$ SAE and maintain its performance at different scales, comparable to Matryoshka SAE.
We evaluate Tree SAEs on hierarchical metrics, feature splitting, absorption, and composition (Section \ref{exp: absorption and splitting and composition}), showing they avoid known hierarchical feature pathologies. We then assess their ability to learn and identify hierarchical structure using activation coverage and reconstruction condition, where Tree SAEs discover substantially more hierarchical pairs than existing SAEs (Section \ref{exp: hierarchy}). Next, we evaluate reconstruction quality via cross-entropy explained and perform interpretability analyses in Section \ref{exp: reconstruction loss}. Finally, following prior state-of-the-art \cite{matryoshka}, we study scaling behavior on all metrics (Section \ref{exp: scalling}), finding that Tree SAEs achieve lower reconstruction loss than Top-$k$ SAEs and maintain strong performance across scales, comparable to Matryoshka SAEs.

% In our experiments, we implement four main SAEs, namely: our Tree SAE, Matryoshka SAE \cite{matryoshka}, Matching Pursuit SAE \cite{mp}, and the baseline Top-$k$ SAE \cite{topk}. If not mentioned otherwise, all of the SAEs have 24k features on GPT2-small \cite{gpt2} at layer 5 with the same set of training hyperparameters, provided in Appendix \ref{sec: sae training}. We use the same Top-$k$ activation function for Tree SAE and Matryoshka to be consistent with Top-$k$ SAE, with four different $L_0$ levels ranging from 32 to 80. While for MP-SAE, we follow \cite{mp} to only fix the maximum number of active features at each instance, allowing MP-SAE to learn an adaptive sparsity level. We therefore report the average $L_0$ level of MP-SAE across 1280k tokens in our experiments. To compare the quality of Tree SAEs at different depths, we implement two layers and four layers Tree (two or four privilege layers) and Matryoshka SAEs (two or four prefixes \cite{matryoshka}), where the number of feature per layer is $[6144, 18432]$ and $[1536, 3072, 9216, 10752]$ respectively. These numbers were not cherry-picked and are the multiplications of the number of dimensions of the hidden latent in GPT2-small. The average sparsity level is kept the same for all of the layers between Matryoshka and Tree SAE to ensure fairness. The sparsity level at each layer, as well as other Tree SAE-specific hyperparameters, are presented in Appendix \ref{sec: sae training}.
In our experiments, we implement four main SAEs, namely: our Tree SAE, Matryoshka SAE \cite{matryoshka}, Matching Pursuit SAE \cite{mp}, and the baseline Top-$k$ SAE \cite{topk}. If not mentioned otherwise, all of the SAEs have 24k features on GPT2-small \cite{gpt2} at layer 5. We use the same Top-$k$ activation function for Tree SAE and Matryoshka to be consistent with Top-$k$ SAE, with four different $L_0$ levels ranging from 32 to 80. While for MP-SAE, we follow \cite{mp} to only fix the maximum number of active features at each instance, allowing MP-SAE to learn an adaptive sparsity level. We therefore report the average $L_0$ level of MP-SAE across 1280k tokens in our experiments. To compare the quality of Tree SAEs at different depths, we implement two layers and four layers Tree (two or four privilege layers) and Matryoshka SAEs (two or four prefixes \cite{matryoshka}), where the average sparsity for each layer in Tree SAE and Matryoshka are kept the same to ensure fairness. The hyperparameter details for each privilege layer, and other Tree SAE-specific hyperparameters, are presented in Appendix \ref{sec: sae training}.

\begin{figure*}[tb]
\begin{center}
\includegraphics[width=0.95\textwidth]{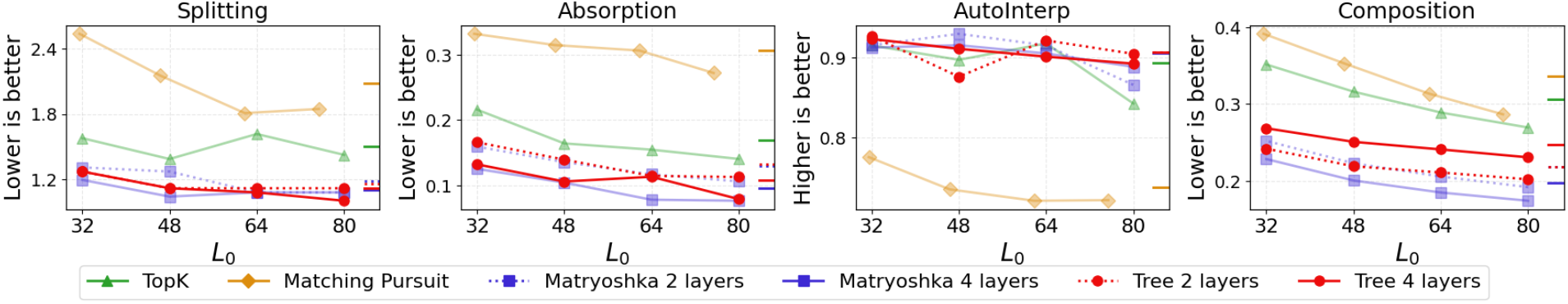} 
\end{center}
\caption{The results of Feature Splitting, Absorption, AutoInterp, and Composition, respectively. The averages across all $L_0$ levels are marked on the right of the plots. 
% Tree SAEs are equal or even better compared with the previous SOTA Matryoshka SAEs on Feature Splitting and Absorption. While on Feature Hierarchy, Tree SAE consistently reaches strong results, outperforming Matryoshka and Matching Pursuit SAEs with four-layer Tree SAEs. For Feature Composition, Tree SAEs show slightly worse results compared to Matryoshka; however, still significantly better than other SAEs.
} 
\vspace{-3mm}
\label{fig: main_hierarchy}
\end{figure*}

\subsection{Feature Absorption, Splitting, and Composition}
\label{exp: absorption and splitting and composition}
We benchmark on feature absorption, splitting, and composition metrics from \cite{absorption, saebench, matryoshka}, specific setups are provided in Appendix \ref{sec: absorption and splitting setup}.

\textbf{Feature Absorption and Splitting.}
The results are shown in Figure \ref{fig: main_hierarchy}, show that Tree SAEs remain competitive with the previous SOTA Matryoshka SAEs. Specifically, we find that both Tree SAEs and Matryoshka SAEs reach strong performance on both Feature Splitting and Absorption compared to Top-$k$ SAE, consistent across the sparsity level and number of layers. On the other hand, MP-SAEs perform significantly worse than Top-$k$ SAE on both metrics.

%\subsection{Feature Composition}
%\label{exp: composition}
% We follow \cite{matryoshka} in quantifying feature composition and shared information between the feature set; setup details are in Appendix \ref{sec: composition setup}.
\textbf{Feature Composition.} The results in Figure \ref{fig: main_hierarchy} indicate that Tree SAEs are comparable to the SOTA Matryoshka SAEs. In particular, we observe that, while Tree SAEs perform slightly worse, the Tree and Matryoshka SAEs substantially have a lower Feature Composition rate compared to other SAEs. Furthermore, the MP-SAE features are noticeably more similar to each other in the encoded information than Top-$k$ SAE. 

\subsection{Learning Hierarchical Feature}
\label{exp: hierarchy}
Given the activation coverage and reconstruction condition, we conduct a quantitative experiment to find which SAE is able to learn more hierarchical feature pairs, and which procedure (our Tree SAE structure or MCS \footnote{We discuss the best variant of MCS in Appendix \ref{sec: choice of MCS}.} \cite{matryoshka, monosemanticity}) is more effective at searching for hierarchical structure. Our observations indicate that Tree SAEs strongly follow both conditions on both procedures, posing a significant gap with the rest of the SAEs.

For each SAE, we sample 100 random parent features that are above the 50\% most dense feature (we sample denser features as they are more likely to have children). We use MCS or Tree Structure to identify the top-5 child features for each parent. For Tree SAE, which can run both procedures, we sample an equal number of children for each procedure (for example, if a parent has 7 children based on our Tree Structure, we also sample the top 7 children using MCS). Then, we train a linear probe to obtain the true direction for each child feature. We report the number of cases where both parent and child features are among the top-5 features with the highest correlation with the true direction, normalised by the total number of child features. Reaching a higher rate means that the parent and child features are more related and follow the \textit{reconstruction} condition.

% \begin{figure}[tbh]
%     \centering
%     \begin{minipage}{.50\textwidth}
%         \includegraphics[width=1\columnwidth]{figures/compare_tree.png}
%         \vspace{-2mm}
%         \caption{The result of the Hierarchy metric of the SAEs across all $L_0$ levels. The averages are marked on the right of the plot. 
%         % Tree SAE with Tree Structure and MCS substantially outperform other SAEs. Furthermore, the Tree Structure is slightly better at detecting hierarchical pairs than MCS.
%         } 
%     \label{fig: compare tree}
%     \end{minipage}
%     \hfill
%     \begin{minipage}{.48\textwidth}
%         \vspace{-1mm}
%         \includegraphics[width=1\columnwidth]{figures/extended_compare_tree.png} 
%         \caption{The result of the Hierarchy metric of the SAEs across all sizes. The averages are marked on the right of the plot. 
%         % Tree SAE with Tree Structure and MCS outperforms Matryoshka SAEs across all sizes. Moreover, the same trend is shown where Tree Structure is slightly more accurate at detecting hierarchical feature pairs.
%         }
%     \label{fig: scale compare tree}
%     \end{minipage}
% \end{figure}

\begin{figure}[ht]
  \begin{center}
    \centerline{\includegraphics[width=1\columnwidth]{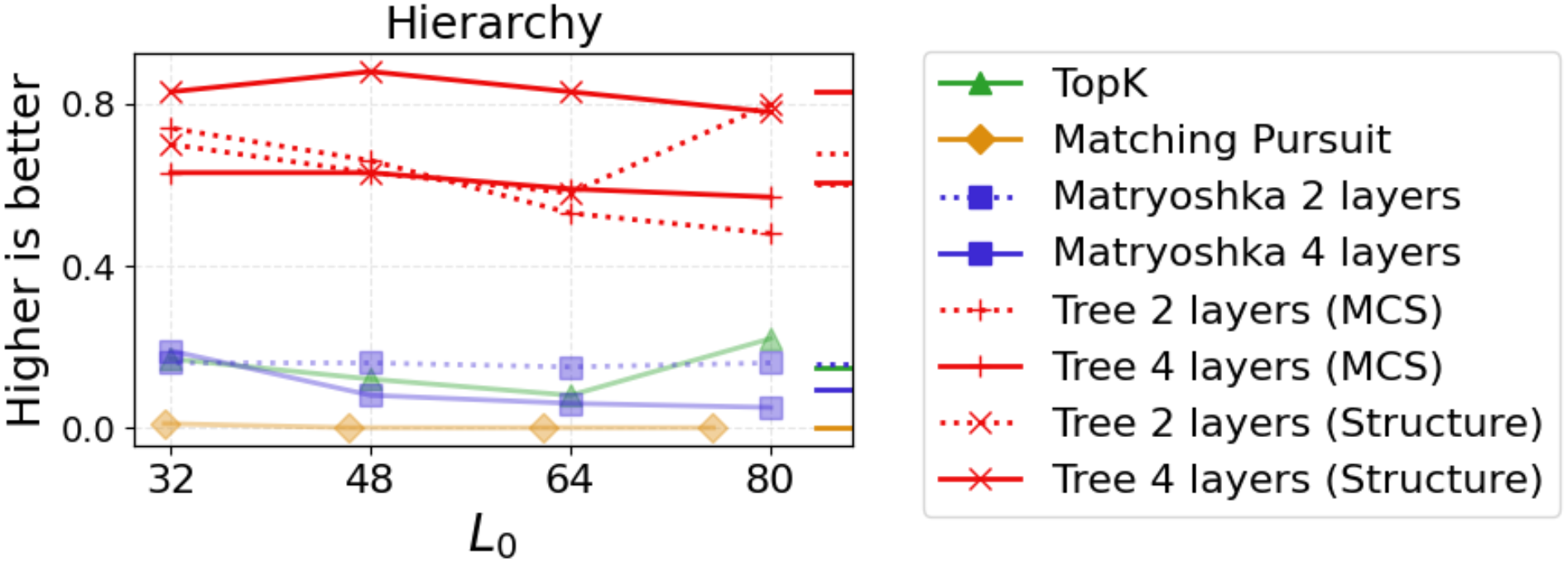}}
    \caption{The result of the Hierarchy metric of the SAEs across all $L_0$ levels. The averages are marked on the right of the plot. 
    % Tree SAE with Tree Structure and MCS substantially outperform other SAEs. Furthermore, the Tree Structure is slightly better at detecting hierarchical pairs than MCS.
    } 
    \vspace{-3mm}
    \label{fig: compare tree}
  \end{center}
\end{figure}

\begin{figure}[ht]
  \begin{center}
    \vspace{-5mm}
    \centerline{\includegraphics[width=1\columnwidth]{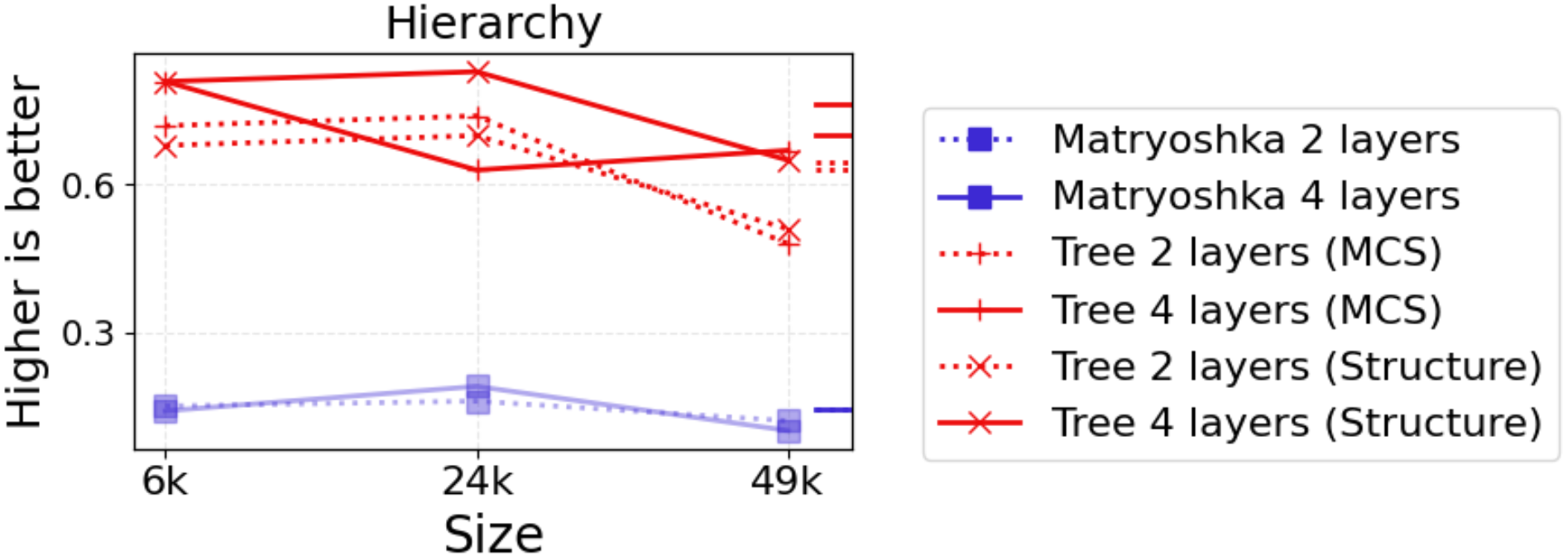}}
    \caption{The result of the Hierarchy metric of the SAEs across all sizes. The averages are marked on the right of the plot. 
    % Tree SAE with Tree Structure and MCS outperforms Matryoshka SAEs across all sizes. Moreover, the same trend is shown where Tree Structure is slightly more accurate at detecting hierarchical feature pairs.
    }
    \label{fig: scale compare tree}
    \vspace{-8mm}
  \end{center}
\end{figure}

The results in Figure \ref{fig: compare tree} show that Tree SAEs discover substantially more hierarchical pairs than others, for both tree structure and MCS. Notably, tree structure is slightly more effective at identifying hierarchical pairs than MCS. On the other hand, the feature pairs identified by MCS are often less coherent, representing unrelated concepts.

\subsection{Reconstruction Loss and Interpretability}
\label{exp: reconstruction loss}
\textbf{Reconstruction Loss.} We evaluate the quality of the reconstruction of the SAEs by quantifying the downstream reconstruction loss and variance explained. Experiment details and results are in Appendix \ref{sec: reconstruction setup} and Figure \ref{fig: reconstruction}, respectively.
The result indicates that Tree SAE is not only competitive in hierarchical-related metrics, but also closely reconstructs the original latents, showing a substantial gap in reconstruction quality compared to both Matryoshka and Top-$k$ SAE. Furthermore, in contrast to other metrics, MP-SAE provides the best reconstruction quality, significantly outperforming other SAEs.

\begin{figure}[ht]
  \begin{center}
    \vspace{-1mm}
    \centerline{\includegraphics[width=1\columnwidth]{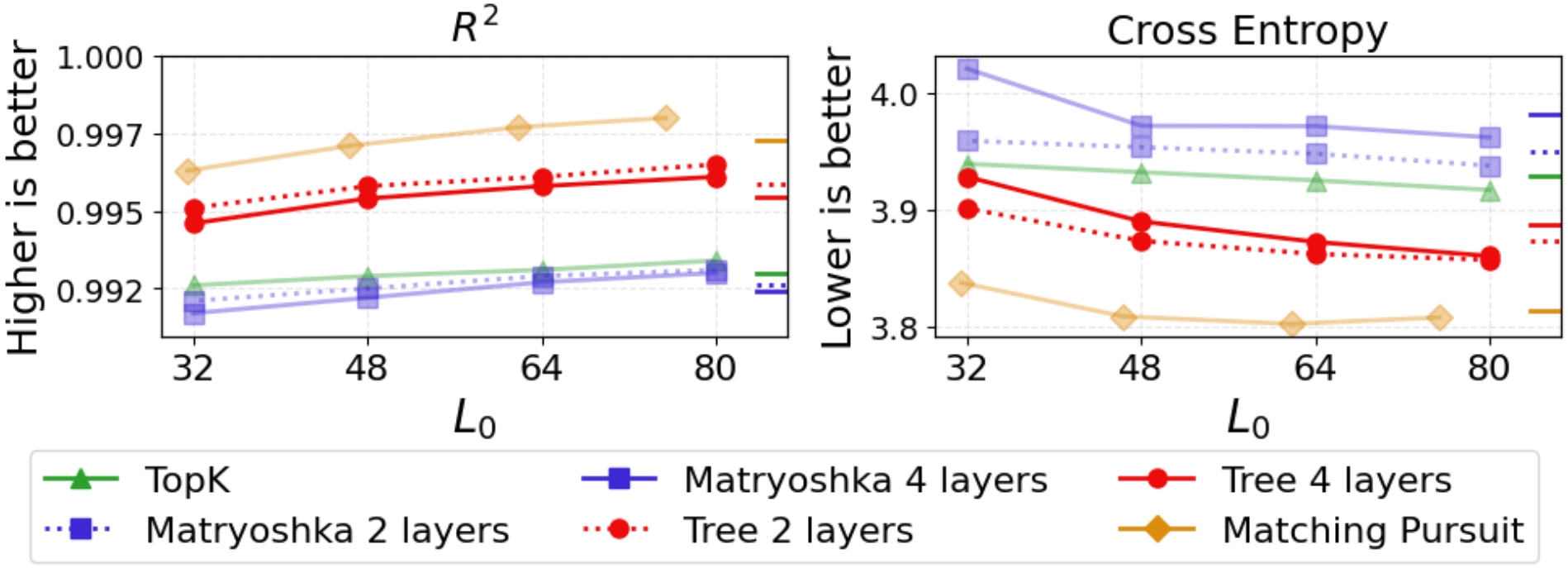}}
    \caption{Variance explained and Downstream loss of the evaluated SAE in four levels of sparsity. 
    % MP-SAEs are the best at reconstruction, followed by Tree SAEs, both consistently yield stronger results than Top-$k$ SAEs; while Matryoshka perform slightly worse than Top-$k$ SAE. In addition, two layers version is slightly better than four layers for both Matryoshka and Tree SAE.
    } 
    \vspace{-1mm}
    \label{fig: reconstruction}
  \end{center}
\end{figure}

\begin{figure}[ht]
  \begin{center}
    \vspace{-5mm}
    \centerline{\includegraphics[width=1\columnwidth]{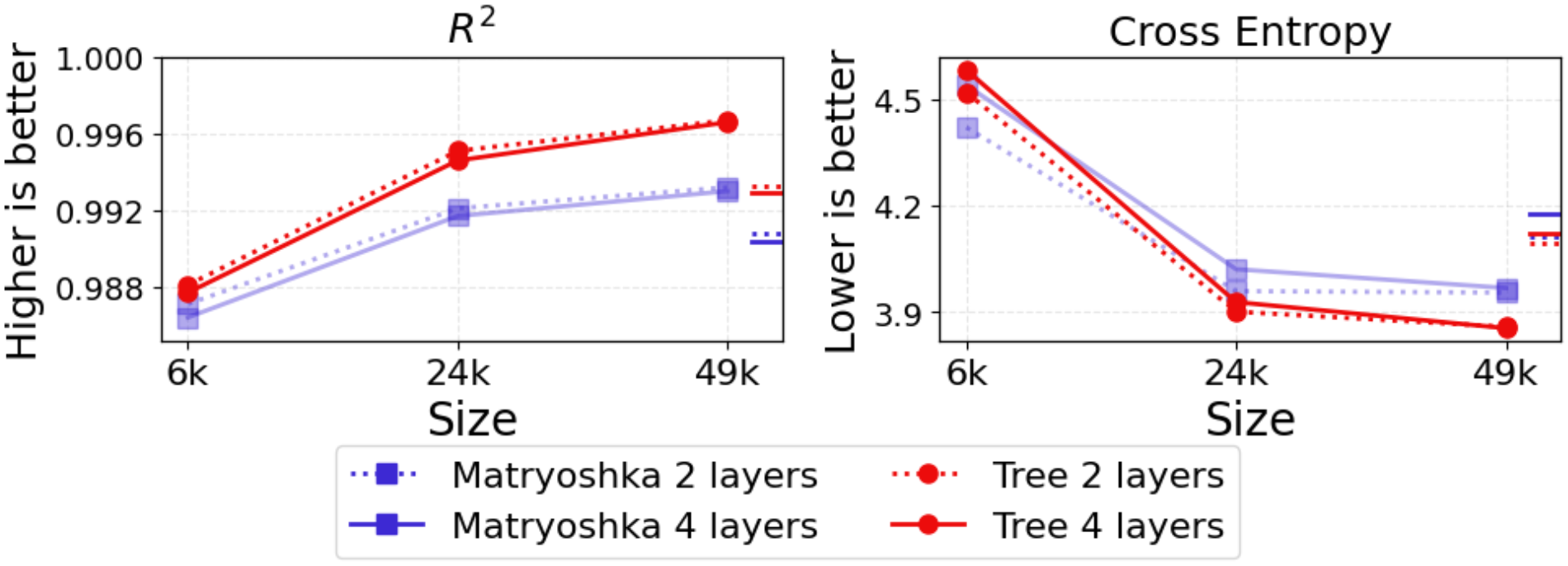}}
    \caption{Variance explained and Downstream loss of Tree SAE and Matryoshka at different scaling. 
    % At a smaller scale, Tree SAEs capture more variance but have slightly worse downstream loss compared to the counterpart. However, as the size increases, our SAEs show a trend of outperforming Matryoshka at reconstruction and learning important information about the model prediction.
    }
    \label{fig: scale reconstruction}
    \vspace{-8mm}
  \end{center}
\end{figure}

% \subsection{Interpretability}
% \label{exp: interp}
\textbf{Interpretability.} We follow the procedure of AutoInterp \cite{auto_interp, automatically_interp_millions} in measuring interpretability of the features, with more details in Appendix \ref{sec: autointerp setup}. Our results in Figure \ref{fig: main_hierarchy} show that the Tree SAEs are highly interpretable, matching the performance of Matryoshka for both 2 and 4 layers version. Top-$k$ SAE remains competitive scores, it has a slightly worse AutoInterp score at $L_0 = 80$. In contrast, MP SAE has less interpretable features than others; its score lowers as the density increase.

\subsection{Scaling of Tree SAEs}
\label{exp: scalling}
We compare the scaling of our Tree SAE with the previous state-of-the-art Matryoshka SAE on the hierarchy-related metrics. We scale the dictionary size to 6k and 49k for both of the SAEs with the same training hyperparameters. We test the SAEs on the smallest $L_0$ setup and scale proportionally to the dictionary size, as it is the hardest setup for Feature Splitting, Absorption, and Composition. The training setup are provided in Appendix \ref{sec: sae training}. 

\begin{figure*}[tb]
\begin{center}
\includegraphics[width=0.95\textwidth]{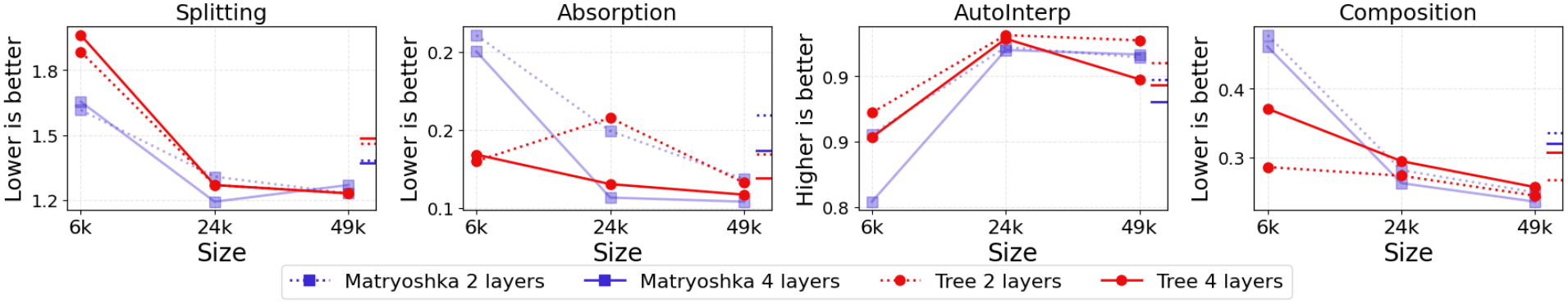} 
\end{center}
\vspace{-2mm}
\caption{Comparison between Matryoshka and Tree SAE at different dictionary sizes on Feature Splitting, Absorption, AutoInterp, and Composition. The averages across all $L_0$ levels are marked on the left of the plots. 
% Tree SAEs scales comparably or even better than Matryoshka on all of the metrics.
} 
\label{fig: extend_hierarchy}
\end{figure*}

We find that Tree SAE scales well with different dictionary sizes compared to Matryoshka SAE (Figure \ref{fig: extend_hierarchy} and \ref{fig: scale compare tree}). Specifically, the Tree SAEs outperform previous state-of-the-art SAEs at Feature Splitting and Feature Absorption at small dictionary size, while having relatively the same performance on large sizes, resulting in better scores on average. Surprisingly, we observe the same trend for Feature Composition, where the Tree SAE significantly outperforms Matryoshka SAE at lower dictionary size and has a comparable Composition rate for the remaining setups. For the AutoInterp metric, we find that Tree SAE is again having a stronger score at 6k; however, as the size increases, both SAEs have roughly the same performance, with the 4-layer Tree SAE at 49k slightly degrading. On the Hierarchy metric, we find a similar trend as in Section \ref{exp: hierarchy} that Tree SAE consistently identifies more hierarchical pairs than the counterpart, and our tree structure is more accurate at finding hierarchical features than MCS. Our results suggest that Tree SAE can scale well compared to the SOTA on hierarchy-related metrics.

\section{Usefulness of Tree Structure}
\label{exp: usefulness}
In this section, we provide qualitative and quantitative experiments to show the usefulness of leveraging an explicit tree structure that accurately identifies hierarchical pairs. We first show that using Tree SAE allows the analysis of hierarchical concept geometry in Section \ref{exp: geometry}. Furthermore, we show that having a tree structure can help understand how a language model breaks down features' concepts into coarse- and fine-grained levels in Section \ref{exp: interpret tree structure}. We then evaluate the diversity of the child features in Section \ref{exp: diversity} to validate that the child features learn to represent different aspects of the parent features.

\begin{figure*}[h]
\begin{center}
\vspace{-2mm}
\includegraphics[width=0.65\textwidth]{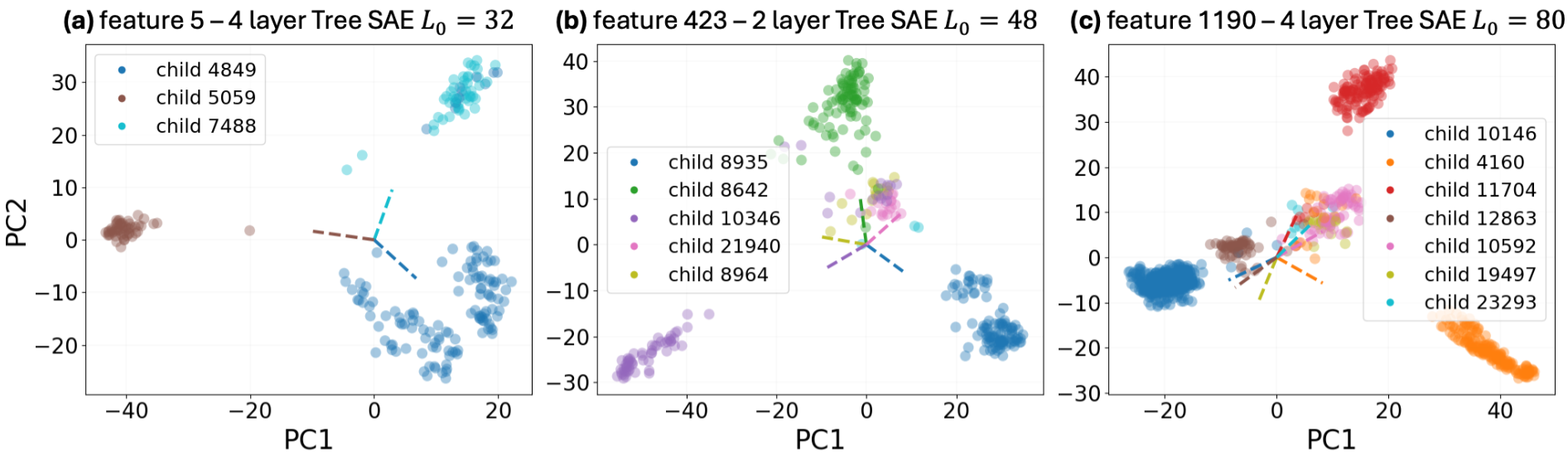} 
\end{center}
\caption{Visualisation of the activation child features of the parent feature 5 in 4 layers Tree SAE $L_0=32$, 423 in 2 layers Tree SAE $L_0=48$, 1190 in 4 layers Tree SAE $L_0=80$, using PCA projection. The corresponding feature encoder vectors are projected onto the same space. The figure indicates that the learned feature vectors of Tree SAE can correctly identify the child feature subspace, allowing hierarchical concept geometry analysis.} 
\vspace{-4mm}
\label{fig: geometry}
\end{figure*}

\subsection{Geometry of Child Feature Space}
\label{exp: geometry}

We provide evidence that Tree SAE can be used to study the geometry of hierarchical concepts. Specifically, we study the geometry of the child concept subspace given a parent. 
% For a random parent feature 26 from Tree SAE 4 layers $L_0 = 32$, we first sample the hidden activation of GPT2-small, where all of the child features activate. Following \cite{saegeometry}, we use LDA to separate the activation of child features and extract the top 2 PCA directions to visualise
For a parent feature, we first sample the hidden activation of GPT2-small, where all of the child features activate, then we use PCA to visualize on the top-2 PCA components.
On the same subspace, we also project the feature encoder vectors of the corresponding child features. The results in Figure \ref{fig: geometry} show that the feature vectors correctly point toward the child concepts on the subspace, suggesting that the Tree SAE encoder correctly learns the subspace that separates the child concepts, allowing geometric analysis of hierarchical concepts. Note that, although the first two examples' features (sub-figures a and b) correctly point toward the child activation clusters, we do find that in some cases, such as sub-figure c, the feature vector of 19497 points toward the opposite direction.  However, in such cases, we find that this is often due to either the inability to separate the child concepts of PCA or the overlapping activation between the child features, and therefore projecting on a noisy subspace leads to poor results.

\subsection{Interpret Features Using Tree Structure}
\label{exp: interpret tree structure}
We demonstrate some qualitative observations on the tree structure of our Tree SAE, showing that it can be used to explore the decomposition of general concepts to finer-grained concepts in a language model. Figure \ref{fig: interpret tree structure1} and \ref{fig: interpret tree structure2} show the learned features on two random features, activating on ``Adjective that is followed by a noun" and ``Entity and Corporation names and context" respectively. We find that while the parent features have more general meanings, the child features break down the activations into interpretable and more specialized cases. For example, the tree structure in Figure \ref{fig: interpret tree structure1} decomposes the general adjective into a range of categories such as ``subsequent", ``various", ``wanted", etc. Moreover, in Figure \ref{fig: interpret tree structure2}, the tree structure correctly classifies related concepts as it separates company names from the general feature and further breaks down concepts across coarse‑ and fine‑grained scales. This, qualitatively, shows the interpretability of our tree structure; more examples are shown in Appendix \ref{sec: additional qualitative observation}.

% \begin{figure}[ht]
%   \begin{center}
%     \centerline{\includegraphics[width=1\columnwidth]{figures/interpret_tree_structure1.png}}
%     \caption{Qualitative observation on the tree structure of root feature 3410 of 2 layers Tree SAE at $L_0=48$. 
%     % The features in the tree break down the general polysemantic concept into monosemantic concepts.
%     }
%     \label{fig: interpret tree structure1} 
%   \end{center}
% \end{figure}

% \begin{figure}[ht]
%   \begin{center}
%     \centerline{\includegraphics[width=1\columnwidth]{figures/interpret_tree_structure2.png} }
%     \caption{Qualitative observation on the tree structure of root feature 135 of 4 layers Tree SAE at $L_0=80$. 
%     % The features in the tree break down the general polysemantic concept into monosemantic concepts.
%     }
%     \vspace{-4mm}
%     \label{fig: interpret tree structure2}
%   \end{center}
% \end{figure}

\subsection{Child Features Diversity}
\label{exp: diversity}
For Tree SAEs to be useful in exploring hierarchical structure, we also want the child features to have non-overlap activation. If the child features from the same parent have a high co-activation rate, then they represent the same concept, suggesting a poor diversity in the feature hierarchy. We therefore measure the average co-activation rate for all of the features from the same parent. The results are shown in table \ref{tab: co-occurrence}. We find that both two-layer and four-layer Tree SAEs have a low co-occurrence rate, only less than 3.0\%. This shows that the learned child features activate on different cases of the parent feature, suggesting a high diversity in the feature set.

%% file: sections/5-discussion.tex
% In this paper, we demonstrate the limitations of previous works in defining and identifying hierarchical features. We therefore propose a stronger definition of hierarchical feature pairs, and further propose a novel SAE to capture hierarchical structure that mitigates hierarchical-related pathology in SAEs. While able to learn significantly more parent and child pairs, our SAE remains competitive in other benchmarks without the reconstruction loss trade-off as in the previous state-of-the-art \cite{matryoshka}. We believe Tree SAE is a promising step toward exploring the underlying hierarchical structure, enabling geometry analysis and interpretation of how the model efficiently encodes the concept structure. 
In this paper, we showed that prior work has important limitations in defining and detecting hierarchical features. We proposed a stronger definition of hierarchical feature pairs and introduced a novel SAE that better captures hierarchical structure and mitigates hierarchy-related pathologies. Our SAE learned substantially more parent–child pairs while remaining competitive on other benchmarks. We believed that Tree SAE is a promising step toward uncovering underlying hierarchical structure and enabling geometric analysis and interpretation of how models encode concepts.

%% file: sections/appendix.tex
\section{Limitations}
\label{sec: limitation}
Although attaining promising results, our SAE contains a high number of dead features compared to other SAEs due to the rigid structure. While dynamic allocation helps, we believe that this problem can not be solved by proposing stronger allocation optimisation, but instead, employing soft-gating on the tree structure might be a promising direction. Furthermore, although learning more hierarchical pairs, Tree SAEs' structure still contains pairs that violate the reconstruction condition. Further improvement might incorporate the condition more gracefully during the training of the SAE to avoid this problem. We leave these directions for future work.

\section{Related Works}
\label{sec: related works}
\subsection{Sparse Autoencoder}
Sparse Autoencoder (SAE) has been considered as the standard way to understand the hidden latent of language model \cite{monosemanticity, jumprelu, cunningham2023sparse, lieberum2024gemma, batchtopk}. It breaks down the latent into a sparse set of interpretable features where each feature selectively activates on a specific context or idea. Inspecting the extracted features from SAEs has led to various applications, including model steering \cite{sae_steering}, model diffing \cite{diffing}, or even tracing the ``thought" of LLM by finding the ``circuit" inside the model \cite{feature_circuit, transcoder_circuit}. The formalisation of the standard SAE architecture, as well as the training objective, is provided in Section \ref{sec: preliminary}.

\subsection{Challenges in Sparsity Training and Learning Hierarchical Features}
\label{sec: hierarchy-related pathology}
The key to the success of SAE is extracting a sparse set of features from an overcomplete dictionary \cite{monosemanticity}, where the features are often human-interpretable and useful in understanding the model \cite{cunningham2023sparse}. However, optimising for reconstruction and sparsity often leads to undesirable phenomena:

\textbf{Feature Splitting and Absorption:} Feature Splitting occurs when a set of features detects the specialised and fragmented concepts, such as "punctuation mark and comma" and "punctuation mark and question", without learning a generalised "punctuation marks"  concept \cite{monosemanticity, absorption}. More problematically, providing more capability for the SAE (scaling the SAE) will make the splitting issue worse \cite{monosemanticity}. This pathology leads to a set of incomplete features that simultaneously represent fragmented and not generalised concepts. On the other hand, Feature Absorption \cite{absorption} happens when the feature representing a concept developed systematic ``blind spots" because of more specialised features. For example, consider a general feature that activates on female names like ``Mary", ``Lily", ``Jane", etc. If another feature specialises in detecting only ``Lily", the sparsity optimisation will likely push the general feature to activate on ``female name except for Lily", leaving a ``hole" in the activation of the general feature \cite{matryoshka}. 

\textbf{Feature Composition:} Feature Composition is a problem where the feature represents a composition concept rather than representing using underlying independent features (i.e. ``red triangle" feature instead of ``red" and ``triangle" features) \cite{composition, metasae}. While combining independent features into a specialised and compositional feature can have the same reconstruction loss and lower $L_0$, this prevents SAE to learn an atomic and unique feature set, posing an urge to learn hierarchical features that correctly decompose, generalise and specialise concepts.

\subsection{Hierarchy SAE}
\label{sec: hierarchy sae}
In the attempt of learning hierarchical features, several ideas have been proposed. \cite{matryoshka} propose Matryoshka SAE that learns multi-level features using Matryoshka representation \cite{matryoshka_representation}. Specifically, they encourage the features to reconstruct the input at multiple layers, fostering the learning of both generalised and specialised features. This has been shown \cite{matryoshka} to mitigate all three mentioned pathologies, namely Feature Splitting, Absorption, and Composition. Furthermore, \cite{matryoshka} outperforms standard SAE \cite{batchtopk} in a toy dataset experiment with known hierarchical features; Matryoshka correctly learns the parent and child features without suffering from absorption. \cite{mp} proposes Matching Pursuit SAE that applies the well-known Matching Pursuit algorithm to learn hierarchical features. The features set is ``conditional orthogonal" \cite{mp, hierarchical_geometry} where parent and child features span in orthogonal subspace. This SAE is shown to faithfully learn hierarchical structure on a toy dataset with both inter- and intra-level feature correlation, while baseline SAE \cite{batchtopk} and Matryoshka SAE \cite{matryoshka} cannot fully reconstruct. However, despite the initial success, all of the previous hierarchical SAEs learn independent features and can not point out the relations between the feature pairs. Furthermore, even when an algorithm such as MCS \cite{monosemanticity, matryoshka} is used to detect the structure, the identified pairs are not guaranteed to be hierarchical, as we show in our experiments and evidence.

\section{SAE Training}
\label{sec: sae training}
In this section, we provide the training details of Tree SAE. We train our SAE on layer 5 of GPT2-small \cite{gpt2} with dictionary sizes of 24576 in our main result, and 6144 and 49152 in the scaling experiment. We train with a batch size of 5120, learning rate 1e-4 on 500M tokens of The Pile \cite{thepile}. We employ the Adam optimiser and normalise the gradient as well as the decoder vector at each backward step. These hyperparameters are shared across all SAEs. 

\textbf{Hyperparameters each privilege layer:} We implement two layers and four layers Tree (two or four privilege layers) and Matryoshka SAEs (two or four prefixes \cite{matryoshka}), where the number of feature per layer is $[6144, 18432]$ and $[1536, 3072, 9216, 10752]$ respectively. These numbers were not cherry-picked and are the multiplications of the number of dimensions of the hidden latent in GPT2-small. To ensure fairness in the evaluation, we compute the average $L_0$ of each layer (prefix) in Matryoshka SAEs, then round the number to an integer and set the sparsity level at each Tree SAE layer with the corresponding value. The sparsity levels for all of the SAEs are provided in table \ref{tab: sparsity level main} and \ref{tab: sparsity level scale} for our main and scaling experiments, respectively.

\textbf{Hyperparameters in scaling experiment:} We evaluate the scaling of our SAE on the smallest $L_0$ setup scaled proportionally to dictionary size ($L_0=8$ for dictionary size of 6k and $L_0=64$ for dictionary size of 49k). We also scale the number of features per layer proportionally to the main experiment.

\begin{table}
    \centering
    \caption{Average $L_0$ of each layer in Matryoshka and Tree SAEs main result.}
    \begin{tabular}{c|c|c|c|c}
        \toprule
        Sparsity level & Matryoshka 2 layers & Tree SAE 2 layers & Matryoshka 4 layers & Tree SAE 4 layers \\
        \midrule
        $L_0=32$ & [26.11, 5.89] & [26, 6] & [20.68, 4.71, 3.79, 2.82] & [21, 5, 4, 2] \\
        $L_0=48$ & [39.70, 8.30] & [40, 8] & [32.22, 6.87, 5.20, 3.71] & [32, 7, 5, 4] \\
        $L_0=64$ & [53.01, 10.99] & [53, 11] & [43.41, 9.02, 6.33, 5.24] & [44, 9, 6, 5] \\
        $L_0=80$ & [68.51, 10.49] & [69, 11] & [55.94, 10.79, 7.51, 5.76] & [56, 11, 7, 6] \\
        \bottomrule
    \end{tabular}
    \label{tab: sparsity level main}
\end{table}

\begin{table}
    \centering
    \caption{Average $L_0$ of each layer in Matryoshka and Tree SAEs in scaling comparison result.}
    \begin{tabular}{c|c|c|c|c}
        \toprule
        Dictionary size & Matryoshka 2 layers & Tree SAE 2 layers & Matryoshka 4 layers & Tree SAE 4 layers \\
        \midrule
        6k & [5.80, 2.20] & [6, 2] & [5.25, 0.41, 0.89, 1.45] & [5, 1, 1, 1] \\
        49k & [55.94, 8.06] & [55, 9] & [51.30, 3.37, 5.22, 4.11] & [51, 4, 5, 4] \\
        \bottomrule
    \end{tabular}
    \label{tab: sparsity level scale}
\end{table}

\textbf{Dynamical allocation:} We reallocate the child features of each privilege layer after every 3000 training steps, and increase the interval by two after each allocation, capping the maximum at 10,000 steps. At half of the training (around 50,000 steps), we move all of the remaining dead features to the root node (privilege layer of 0) to allow them to activate freely. We find that this improves the reconstruction and lowers the dead feature rate. The allocation algorithm is shown in Algorithm \ref{alg: dynamic alloc}.

\textbf{Auxiliary loss:} For TopK, Matryoshka, and Tree SAE, we use auxiliary loss. We set the number of auxiliary top-k as 256, and the coefficient is 1/32 as in the original implementation for both Matryoshka and TopK \cite{topk, matryoshka}. We set the same hyperparameter for Tree SAE; however, we only use auxiliary loss for features at the first privilege layer ($\alpha_1 = 1/32; \alpha_{2, 3, 4} = 0$). Our results show that this allows stronger reconstruction quality with a trade-off for dead feature rate; nevertheless, we find that setting auxiliary coefficient as $\alpha_{1, 2, 3, 4} = 1/128$ for other layers can be beneficial on other models such as Gemma-2-2b \cite{gemma_2}.

\section{Experiment setups}
\label{sec: exp setup}
In this section, we summarise the experiment setups in Section \ref{sec: experiment} for existing metrics \cite{absorption, auto_interp}. 

\subsection{Feature Absorption, Splitting, and Composiion}
\label{sec: absorption and splitting setup}
\textbf{Feature Absorption and Splitting.} We follow the same setup proposed by \cite{absorption} to measure the absorption and splitting rate using first-letter classification tasks.
For feature absorption, \cite{absorption} trains a linear regression to find which direction is responsible for the first letter, then trains a k-sparse probe \cite{ksparse} to select the top features that activate the first letter and calculate the cases where the top features fail to detect the letter. 
For feature splitting, \cite{absorption} again selects the top features for the first letter task using a k-sparse probe, and measures whether adding additional features leads to a significant improvement ($F_1$ score increase of more than 0.03 \cite{absorption}) in detecting the first letter. 
We consider a feature that represents a first-letter concept as having $F_1 > 0.4$ on the task. The remaining setups are similar to those in \cite{saebench}.

% \subsection{Feature Composition}
% \label{sec: composition setup}
\textbf{Feature Composition.} We find the average maximum cosine similarity between the feature direction for all of the features. An SAE has high average cosine similarity indicates that multiple features represent the same information, suggesting feature composition \cite{matryoshka}. 

\subsection{Reconstruction Loss}
\label{sec: reconstruction setup}
Firstly, we compute the downstream reconstruction loss measured cross entropy loss of the substituting the original hidden latent of language model with the reconstruction of the SAEs. Secondly, we compute the variance explained of each token. The lower the cross entropy loss the more information encoded, while the higher the variance explained the better the reconstruction. The results are averaged across 128K tokens, shown in Figure \ref{fig: reconstruction}.

\subsection{AutoInterp}
\label{sec: autointerp setup}
We follow the procedure of AutoInterp \cite{auto_interp, automatically_interp_millions} in measuring the interpretability of the features. Specifically, a large language model is presented with a number of activation examples of a feature, then is asked to predict the rank of different examples by feature activation. We randomly select 200 features to evaluate; the remaining setup is the same as in \cite{saebench}.

\section{Cases Where Activation Coverage Fails}
\label{sec: exp activation coverage fail}
We illustrate more examples of activation coverage that lead to unrelated parent and child features, and the reconstruction condition allows coherent hierarchical pairs. Figure \ref{fig: non dense feature evidence 3} shows that feature 16758 of Matryoshka 4 layers $L_0=80$ shares the same meaning with the parent feature 1605, indeed has high correlation, while feature 23918 represents a completely different concept and has low probe correlation despite having 0.98 activation coverage score. Moreover, in Figure \ref{fig: non dense feature evidence 2} with TopK SAE $L_0=32$, we find that all of the features represent specialised cases of the parent feature have high probe correlation, verifying that the reconstruction condition allows intuitive hierarchical feature pairs.

We further show that, in general, the more strongly hierarchical the feature pair, the more they follow the reconstruction condition. We conduct additional experiment on the baseline SAE across all sparsity level. We follow the procedure presented in Section \ref{exp: hierarchy} to measure the rate of both parent and child feature are among the top-5 correlation with the probe weight trained on the child concepts. We randomly sample 100 parents, each with the top-20 child features with the highest activation coverage score. Our intuition is that, the higher the coverage, the higher chance for the pair to be parent and child. Therefore, we want to show that, the higher coverage score, the higher the reconstruction score. We average the score from top-1 to top-20, shown in Figure \ref{fig: top child parent correlation}. The results are consistence across all sparsity level, suggesting that generally, hierarchical pairs follow the reconstruction condition.

\begin{table}
    \centering
    % \small
    \caption{Average co-occurrence rate of child features in Tree SAE tree structure.}
    \begin{tabular}{lcccc}
        \toprule
         & \multicolumn{4}{c}{Sparsity level ($L_0$)} \\
        \cmidrule{2-5}
        \multicolumn{1}{c}{SAE} & 32 & 48 & 64 & 80 \\
        \midrule
        Tree SAE 2 layers & 0.013 & 0.017 & 0.025 & 0.027 \\
        Tree SAE 4 layers & 0.010 & 0.019 & 0.024 & 0.028 \\
        \bottomrule
    \end{tabular}
    \vspace{3mm}
    \label{tab: co-occurrence}
    \vspace{-3mm}
\end{table}

\begin{figure}
\begin{center}
\includegraphics[width=0.7\textwidth]{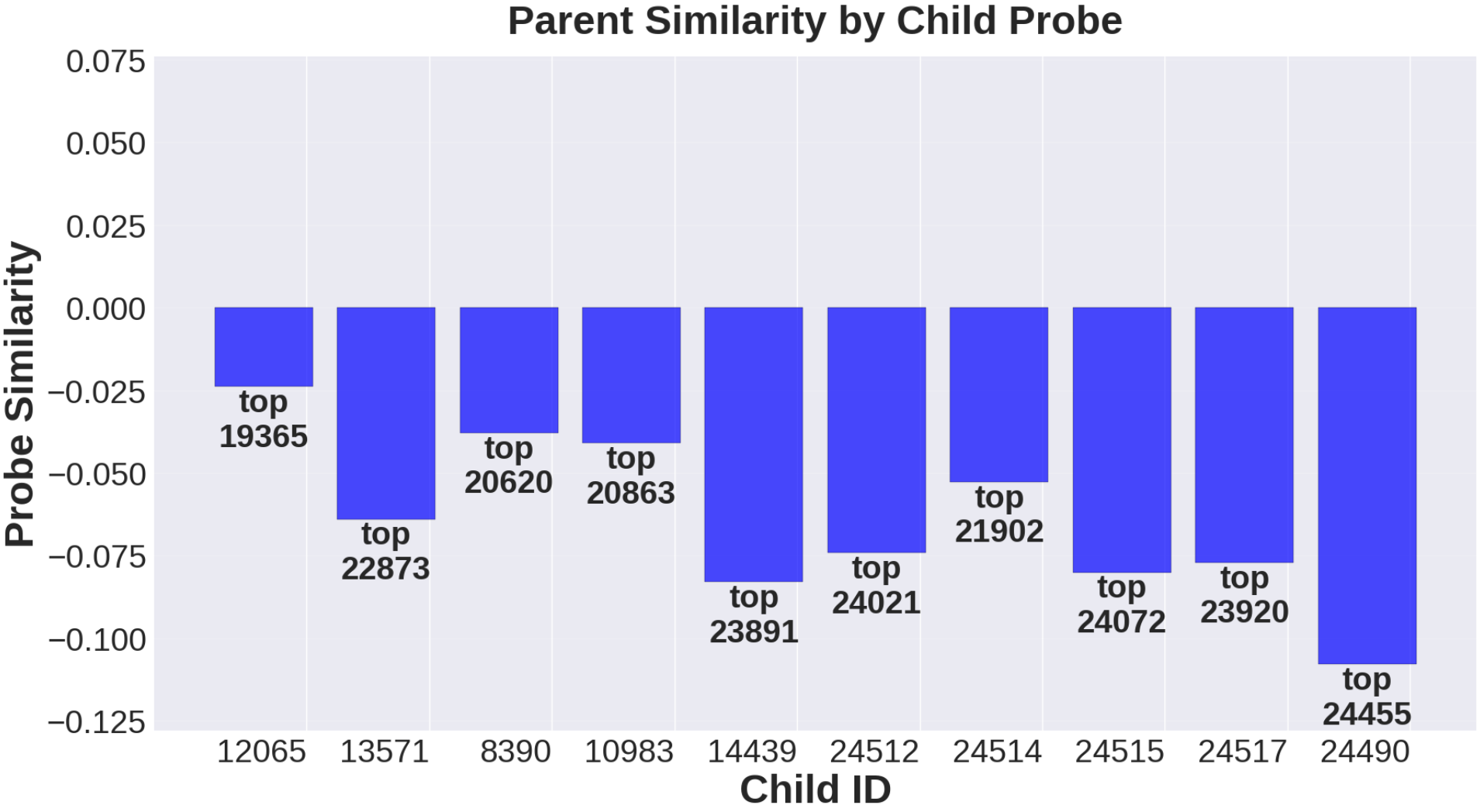} 
\end{center}
\caption{The correlation between the dense ``PCA feature" (feature 3098 - Tree SAE 4 layers $L_0=32$) and 10 probe weights trained on 10 child features activation. In all of the cases, parent features have low correlation, indicating that the parent feature represents a different meaning from all of the child features while having perfect coverage.} 
\label{fig: dense feature evidence}
\end{figure}

\begin{figure}
\begin{center}
\includegraphics[width=0.9\textwidth]{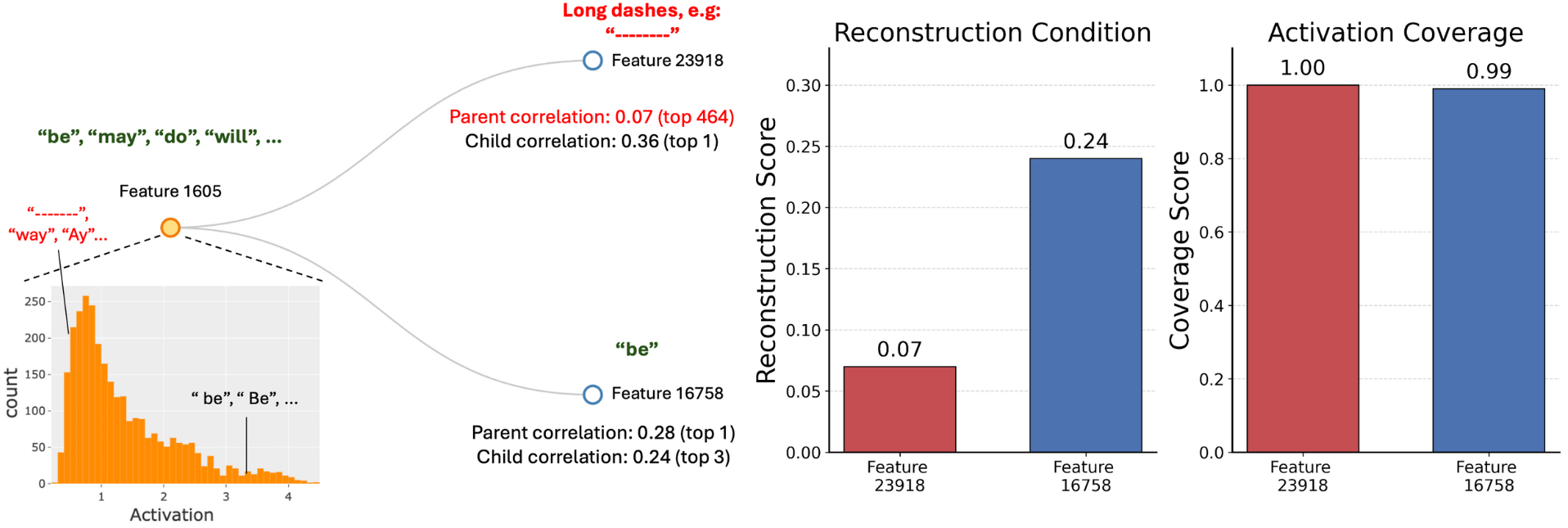} 
\end{center}
\caption{The correlation between non-dense feature (feature 1605 - Matryoshka 4 layers $L_0=80$) and 2 probe weights trained on 2 child features activation with activation coverage scores above 0.98. Child feature 16758, representing a different concept from the rest, corresponding to a spurious low activation value of the parent feature, has significantly lower parent correlation.}
\label{fig: non dense feature evidence 2}
\end{figure}

\begin{figure}
\begin{center}
\includegraphics[width=0.9\textwidth]{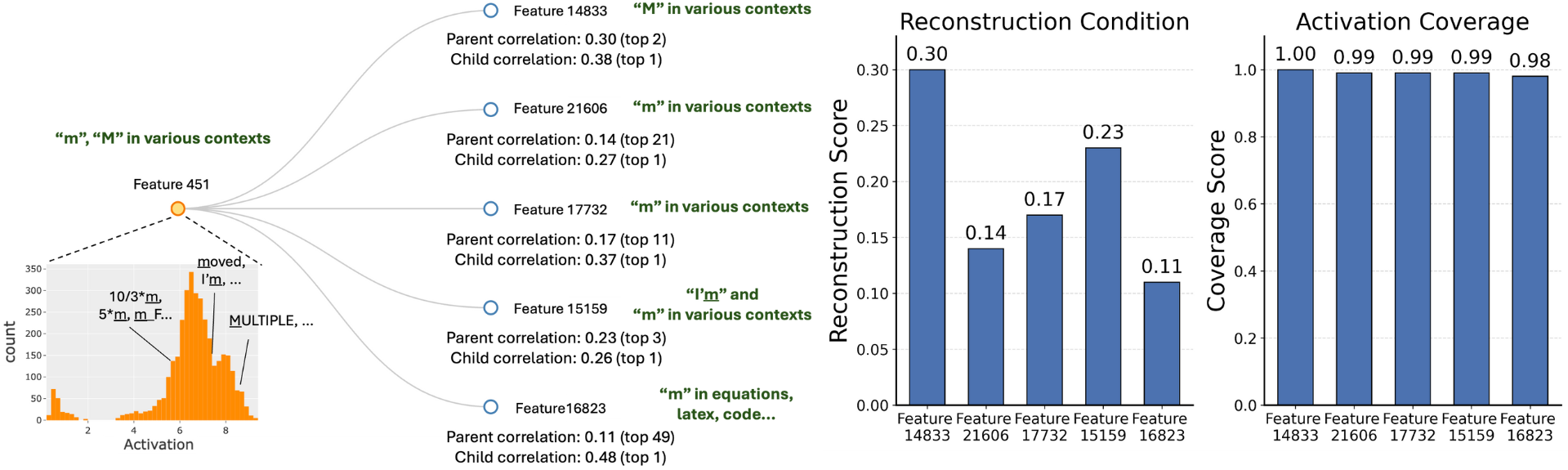} 
\end{center}
\caption{The correlation between the non-dense (feature 451 - TopK $L_0=32$) and 5 probe weights trained on 5 child features activation with activation coverage scores above 0.98. In all of the cases, the parent feature has a high correlation as the meaning of the feature pairs is related. However, feature 16823's probe has slightly lower correlation compared to the rest because it represents a slightly different context from the parent feature.} 
\label{fig: non dense feature evidence 3}
\end{figure}

%end

\begin{figure}
\begin{center}
\includegraphics[width=\textwidth]{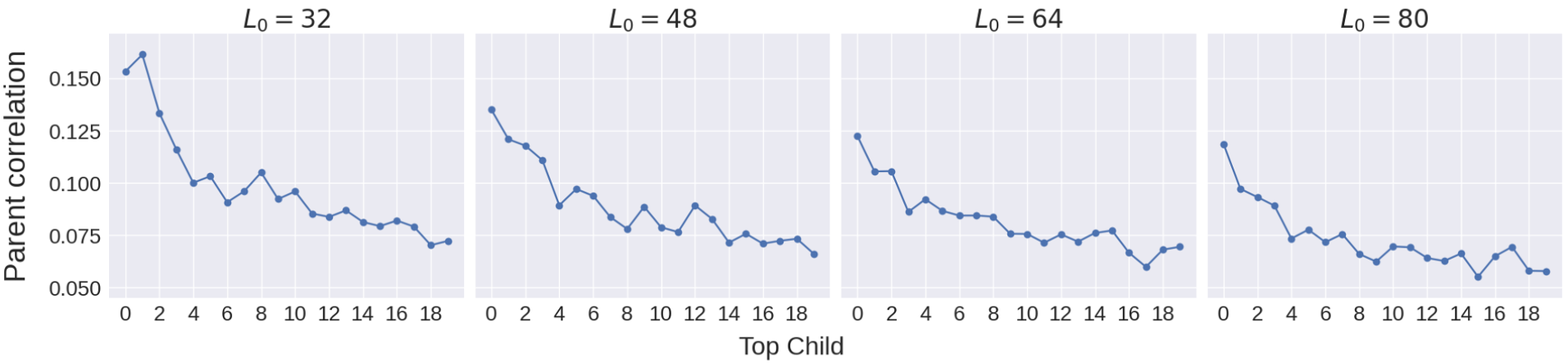} 
\end{center}
\caption{The average of 100 parent features' correlation with probe weight for the top-1 to top-20 child features with the highest activation coverage of TopK SAEs. In general, the higher the coverage score, the higher the probe correlation, suggesting that it is natural for a hierarchical pair to follow the reconstruction condition.} 
\label{fig: top child parent correlation}
\end{figure}

\section{Necessity of Both Conditions}
\label{sec: necessity of both}
As we have shown in Appendix \ref{sec: exp activation coverage fail}, the activation coverage alone is not enough to find hierarchical pairs, and reconstruction condition correlates well with activation coverage. It can be argued that reconstruction condition is a stricter version of activation coverage and we could use only the reconstruction criterion to search for hierarchical pairs. However, we provide another example based on the non-dense feature (feature 1343 - Tree SAE 2 layers $L_0=32$) shown in Appendix \ref{sec: exp activation coverage fail} that we do need activation coverage. Specifically, we find that feature 4002 has top-2 correlation with the probe weight of feature 7487, compared to the parent feature with top-3 correlation. If we follow the reconstruction condition only, feature 4002 would be more likely to parent feature of 7487. However, feature 4002 activates on `` Mon" or `` Poly" tokens, suggesting that its concept is `` numerical prefixes" instead of representing ``Monday" as in feature 7487 while activating on similar tokens. This hint the insight for the role of each condition, we believe the activation coverage ensures the parent concepts general enough to covers the child concept, while the reconstruction forces the parent concept to specific enough that the meaning of both features are related.

\section{Dynamic Allocation Details}
\label{sec: dynamic allocation detail}
We provide proof that solving Equation (\ref{eq: dynamic alloc}) has the complexity of $O(s_l\log(m_l))$ using a greedy algorithm where $s_l$ and $m_l = \sum_{t=0}^{l-1}s_t$ are the number of features and parents at privilege layer $l$. For each parent, let $k_{l, p} \geq 0\:, \: p\in \{1, \dots m_l\}$ be the number child features from the allocation $\mathbf{a}_l$:

\textbf{Theorem 4.1:} For any $\tau > 0$, there exists an allocation $\mathbf{a}_l$ with $k_{l, 1}, \dotsc, k_{l, m_l} \in \mathbb{Z}_{\geq 0}$, $\sum_p k_{l,p} = s_l$, such that $\min_{p| k_{l, p} > 0} (C_p / k_{l, p}) \geq \tau$ if and only if
\[
\sum_{p=1}^{m_l} \left\lfloor \frac{C_p}{\tau} \right\rfloor \geq s_l.
\]

\textbf{Proof:}

\noindent \textbf{Only if.}
Assume an allocation with $\sum_{p=1}^{m_l} k_{l, p} = s_l$ achieves $\min_{p| k_{l, p} > 0} (C_p / k_{l, p}) \geq \tau$. Then for every $p$ with $k_{l, p} > 0$,
\[
\frac{C_p}{k_{l, p}} \geq \tau \Rightarrow k_{l, p} \leq \frac{C_p}{\tau} \Rightarrow k_{l, p} \leq \left\lfloor \frac{C_p}{\tau} \right\rfloor.
\]
For $p$ with $k_{l, p} = 0$, the inequality also holds. Summing over $p$:
\[
s_l = \sum_{p=1}^{m_l} k_{l, p} \leq \sum \left\lfloor \frac{C_p}{\tau} \right\rfloor.
\]
Hence $\sum \left\lfloor \frac{C_p}{\tau} \right\rfloor \geq s_l$.

\noindent \textbf{If.}
Assume $\sum \left\lfloor \frac{C_p}{\tau} \right\rfloor \geq s_l$. Pick any integers $k_{l, p}$ with $0 \leq k_{l, p} \leq \left\lfloor \frac{C_p}{\tau} \right\rfloor$ and $\sum_{p=1}^{m_l} k_{l, p} = s_l$ (e.g., greedily fill parents up to their caps $\left\lfloor \frac{C_p}{\tau} \right\rfloor$ until we place $s_l$ children; this is always possible because the sum of caps is at least $s_l$). For any $p$ with $k_{l, p} > 0$,
\[
k_{l, p} \leq \left\lfloor \frac{C_p}{\tau} \right\rfloor \leq \frac{C_p}{\tau} \Rightarrow \frac{C_p}{k_{l, p}} \geq \tau.
\]
Therefore $\min_{p| k_{l, p} > 0} (C_p / k_{l, p}) \geq \tau$. This completes the proof for the theorem.

\textbf{Optimality of Greedy:}

Define the optimal value:
\[
\tau^* = \sup \left\{ \tau > 0 : \sum \left\lfloor \frac{C_p}{\tau} \right\rfloor \geq s_l \right\}.
\]

Equivalently, let $S$ be the multi-set:
\[
S_{s_l} = \left\{ \frac{C_p}{k} : p = 1,\dotsc,m_l;\ k = 1,2,3,\dotsc \right\}.
\]

Then $\tau^*$ equals the $s_l$-th largest element of $S$ (ties allowed). This is because: assuming that the $s_l$ largest elements are $S_{s_l}=\{\frac{C_p}{1}, \dotsc,\frac{C_p}{k'_p}: p = 1,\dotsc,m_l; \, k'_p \geq 1\}$ (since the larger $k'_p$, the smaller the element), then $\sum_{p| \: k'_p \geq 1} k'_p = s_l$. Let $\tau^*$ be the smallest element in $S_{s_l}$, then $\sum \left\lfloor \frac{C_p}{\tau^*} \right\rfloor \geq s_l$. As stated in theorem \ref{theorem: threshold feasibility}, there exist an allocation satisfy the problem in Equation (\ref{eq: dynamic alloc}) for optimal value $\tau^*$.

Therefore, at each step, we can greedily allocate one child to the parent with the highest payoff; we can implement this efficiently using a heap with the complexity of $O(\log(m_l))$. The full greedy algorithm is provided in Algorithm \ref{alg: dynamic alloc}. We allow additional eligible conditions for a feature to be a parent, avoiding too sparsely activated parent features. Specifically, we require that the parent features have an activation rate of 1 every 50,000 tokens or higher. This activation rate is based on our observation that sparser features should not be decomposed further. 

\begin{algorithm}[H]
    \caption{Greedy allocation algorithm.}
    \textbf{Input}: Capacity set $\mathbb{C}_l=\{C_p, \: p \in \{1, 2, \dots, m_l\}\}$; Number of child features $s_l$ at layer $l$.
    \begin{algorithmic}[0]
        \STATE Initialize the (optimal) number of child per parent $k^*_{l, p}$
        \STATE $H \leftarrow \emptyset$
        \STATE $m_l \leftarrow |\mathbb{C}|$
        \FOR{parent $p$ in $\{1, \dots, m_l\}$}
            \STATE \COMMENT{See Section \ref{sec: dynamic allocation detail} for eligibility condition}
            \IF{$parent\_is\_eligible()$}  
                \STATE $H \leftarrow H \,\cup \, (C_p / 1, p)$ 
            \ENDIF
        \ENDFOR
        \STATE Heapify $H$
        \STATE $assigned \leftarrow 0$
        \WHILE{$H \neq \emptyset$ and $assigned < s_l$}
            \STATE $c, \, p \leftarrow H.heap\_pop()$
            \STATE $k^*_{l, p} \leftarrow k^*_{l, p} + 1$
            \STATE $c \leftarrow C_p / (k^*_{l, p}+1)$
            \STATE $H.heap\_push((c, p)$)
            \STATE $assigned \leftarrow assigned + 1$
        \ENDWHILE
    \end{algorithmic}\label{alg: dynamic alloc}
    \textbf{Return}: $\mathbf{k}^*_l$
\end{algorithm}

We provide the full reallocation algorithm in algorithm \ref{alg: full algo}. In particular, we compute the best numbers of children $\mathbf{k}^*_l$, then sample the pool of dead child features, usually defined as features that are inactive for 10M tokens \cite{topk, cunningham2023sparse}. In the step of assigning child features to match the optimal allocation, we employ first fit strategy for all of the parents that have less than the optimal number of non-dead child features. Optionally, we can enforce a prerequisited number of child features for the root node at $l=0$. We find this can helps reducing number of dead features in Gemma-2-2b \cite{gemma_2}.

\begin{algorithm}[H]
    \caption{Full dynamical feature reallocation algorithm.}
    \textbf{Input}: Current assignment vectors $\{\mathbf{a}_l \: \forall l \in \{1, \dots, L\}\}$; Number of child features $s_l$ at layer $l$.
    \begin{algorithmic}[0]
        \FOR{layer $l$ in $\{1, \dots, L\}$}
            \STATE Compute capacity set $\mathbb{C}_l=\{C_p, \: p \in \{1, 2, \dots, m_l\}\}$
            \STATE Compute the number of children for each parent with the current assignment $\mathbf{a}_l$: $\mathbf{k}_l$
            \STATE $\mathbf{k}^*_l \leftarrow$ $Greedy\_allocation\_algorithm(\mathbb{C}_l, s_l)$ \: \: \: \COMMENT{algorithm \ref{alg: dynamic alloc}}
            \STATE Sample dead child features pool at layer $l$
            \STATE $\mathbf{a}_l \leftarrow$ Assigning child features to match $\mathbf{k}^*_l$
        \ENDFOR
    \end{algorithmic}\label{alg: full algo}
    \textbf{Return}: $\{\mathbf{a}_l \: \forall l \in \{1, \dots, L\}\}$
\end{algorithm}

We compare the Tree SAE setup with and without dynamic dead-feature allocation. We run the experiment on a 4-layer Tree SAE with $L_0 = 32$ (the hardest setup to avoid dead features). To see the effect of auxiliary loss and dynamic allocation, we use auxiliary loss for all layers in both runs, while allowing dynamic allocation only in one run. Figure \ref{fig: dead feature rate} shows that by allocating child features to more needed parents, the Tree SAE with dynamic allocation achieves an almost zero dead-feature rate at all layers starting at 40k steps, indicating that the learned features receive many gradient updates throughout training. On the other hand, the Tree SAE without dynamic allocation has an 8\% dead-feature rate at layer 3, whereas at layer 2 the dead-feature rate reaches zero at the end of training, indicating that features receive fewer gradient updates and are poorly learned. Furthermore, as shown in Figure \ref{fig: dead feature rate}, relocating the features significantly reduces the number of dead features (indicated by the steep drop), demonstrating the effectiveness of our feature allocation method on multi-layer Tree SAE.

\begin{figure}
\begin{center}
\includegraphics[width=1\textwidth]{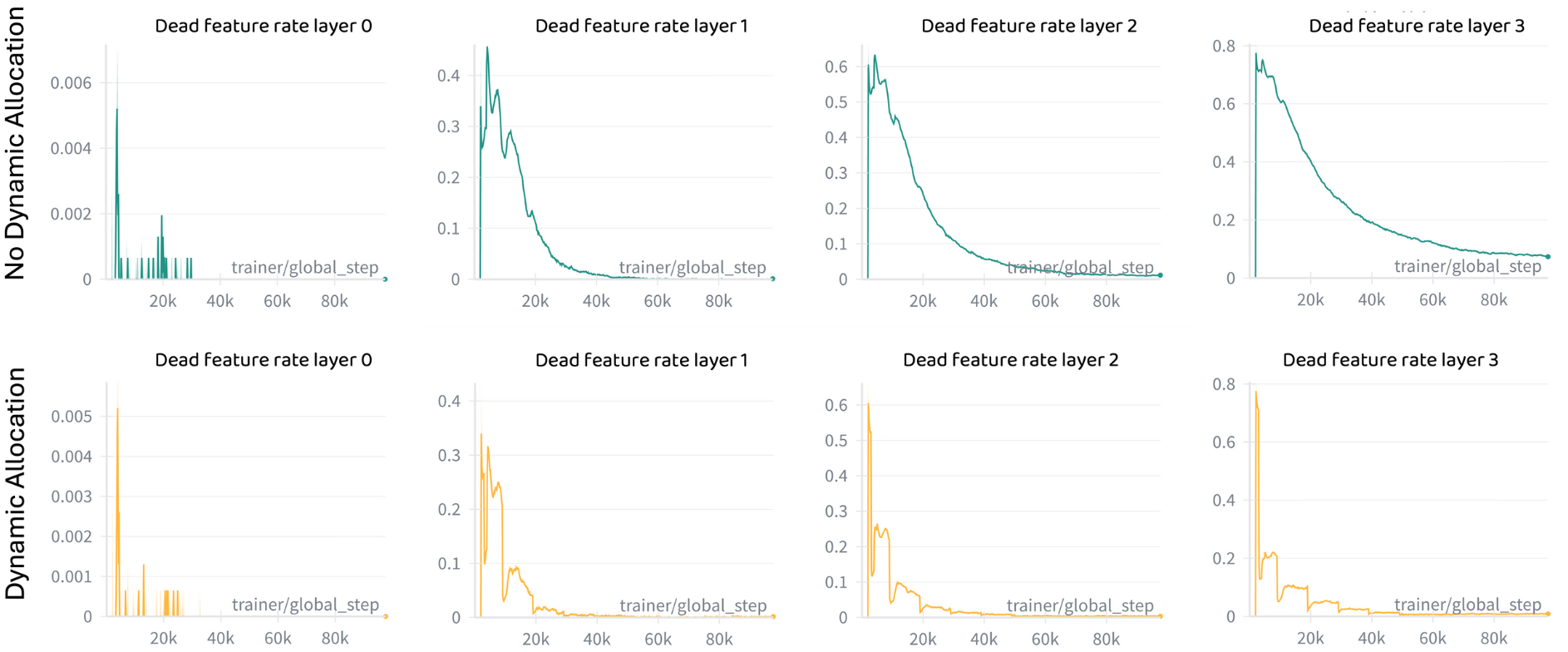} 
\end{center}
\caption{The comparison of the dead feature rate with and without the dynamic allocation method on a 4-layer Tree SAE with $L_0=32$.} 
\label{fig: dead feature rate}
\end{figure}

\section{Multi-level Reconstruction Loss Combine with Activation Coverage Improve Reconstruction Score}
\label{sec: reconstruction loss and activation coverage improves reconstruction score}
In this section, we give answers to two key questions:
\begin{itemize}
    \item One might ask the multi-level reconstruction loss (Equation \ref{eq: multilevel_reconstruction}) and tree structure (Equation \ref{eq: enforce activation coverage}) are necessary. We will show that these two components are crucial for the reconstruction score increase (Equation \ref{eq: reconstruction condition}). We there for answer the following question: \textit{``How do multi-level reconstruction loss and tree Structure enforce the feature pairs to follow reconstruction condition?"}.

    \item One could argue to use child encoder as the approximation of ``true direction of child concept $\mathbf{d}_c^*$" instead of training a separated probe as we do in Section \ref{sec: activation coverage fail}. The answer is that the child features encoder and parent decoder vector have low cosine similarity, and thus, cannot differentiate between the correct and incorrect pairs. Consequently, we answer the following question: \textit{``Why child encoder has low cosine similarity with parent feature decoder?"}.
\end{itemize}

\textbf{Notations:} We consider the learning of two feature in our Tree SAE: parent feature $f_p\geq0$ and child feature $f_c\geq0$ to represent true concept vector be $\mathbf{d}_c^\ast$ with positive true activation $f_c^\ast>0$. We assume $f_p$ and $f_c$ are parent and child pair defined by our tree SAE structure (Equation \ref{eq: enforce activation coverage}), which strictly follow the activation coverage condition: $S_{cov}=1$ (Equation \ref{eq: activation coverage}). And let $S_p=\mathbf{d}_p^T\mathbf{d}_c^\ast$, $S_c=\mathbf{d}_c^T\mathbf{d}_c^\ast$, and $k=\mathbf{d}_p^T\mathbf{d}_c$. Without loss of generality, we:

\begin{itemize}
    \item constrain all vectors to the unit sphere: $||\mathbf{d}_p||_2=||\mathbf{d}_c||_2=||\mathbf{d}_c^\ast||_2=1$, where we normalize the decoder vectors at after each gradient update.
    
    \item rescale the activation: $f_p/f_c^\ast=\alpha,f_c/f_c^\ast=\beta$
    
    \item consider a standard ReLU activation SAE: $f_p=(\mathbf{e}_p^T\mathbf{d}_c^\ast)f_c^\ast$ and $f_c=(\mathbf{e}_c^T\mathbf{d}_c^\ast)f_c^\ast$, where $\mathbf{e}_p, \mathbf{e}_c$ are the encoder vector of $f_p, f_c$ respectively.
\end{itemize}

We have $\alpha, \beta, \mathbf{d}_p, \mathbf{d}_c$ are independent.

We analyze two loss functions, our loss (Equation \ref{eq: multilevel_reconstruction}): $L_1=||\alpha \mathbf{d}_p-\mathbf{d}_c^\ast||^2_2+||\alpha \mathbf{d}_p+\beta \mathbf{d}_c-\mathbf{d}_c^\ast||^2_2$, and standard SAE loss: $L_2=||\alpha \mathbf{d}_p+\beta \mathbf{d}_c-\mathbf{d}_c^\ast||^2_2$.

\textbf{Q1: How do multi-level reconstruction loss and tree Structure enforce the feature pairs to follow reconstruction condition?:}

\textbf{Proporsition 1:} We will prove that optimize using $L_1$ will either yield high reconstruction score $S_{res}=\min(S_p,S_c)$ or $f_p$ learns the concept directly ($S_p=1, S_c=0$), while using $L_2$ can suffer from $S_c=1, S_p=0$ where the parent feature are unrelated to the child concept (similar to example in Figure 1).
\label{proposition1}

\textbf{Proof:} We rewrite $L_1$ as: 
\begin{equation}
    L_1(\alpha, \beta,\mathbf{d}_p,\mathbf{d}_c)=2\alpha^2\mathbf{d}_p^T\mathbf{d}_p+\beta^2\mathbf{d}_c^T\mathbf{d}_c+2-4\alpha S_p-2\beta S_c+2\alpha\beta k
    \label{eq: l1_expansion}
\end{equation}

The gradients w.r.t each feature vector are: 
\[\partial L_1/\partial \mathbf{d}_p=-4\alpha \mathbf{d}_c^\ast+2\alpha\beta \mathbf{d}_c+4\alpha^2\mathbf{d}_p,\] 
\[\partial L_1/\partial \mathbf{d}_c=-2\beta \mathbf{d}_c^\ast+2\alpha\beta \mathbf{d}_p+\beta^2\mathbf{d}_c.\] 
Because of the term $2\alpha\beta \mathbf{d}_c$ in $\partial L_1/\partial \mathbf{d}_p$, and $2\alpha\beta \mathbf{d}_p$ in $\partial L_1/\partial \mathbf{d}_c$; the decoder vector of the two features will be pushed near orthogonal to each other, i.e. $|k|=|\mathbf{d}_p^T\mathbf{d}_c|\ll1$ (we also empirically justify that $|k|\ll1$ with an experiment below Table \ref{tab:measure k}). Similar argument, both $\mathbf{d}_p$ and $\mathbf{d}_c$ will be pushed toward $\mathbf{d}_c^\ast$, hence, we expect $S_p, S_c\in[0,1]$ at convergence.

At convergence, $\partial L_1/\partial \alpha=4\alpha-4S_p+2\beta k=0$ and $\partial L_1/\partial \beta=2\beta-2S_c+2\alpha k=0$. Solving these two equations yields the value of $\alpha$ and $\beta$ at convergence (here, we simplify $k^2\simeq0$ because $|k|\ll1$): $$\alpha^\ast=\frac{2S_p-kS_c}{2-k^2}\simeq S_p-kS_c/2,$$
$$\beta^\ast\simeq S_c-kS_p.$$ 

We rewrite Equation \ref{eq: l1_expansion} during convergence with $\alpha^\ast$ and $\beta^\ast$ and remove the term $k^2$: 
\begin{equation}
    L_1(\alpha^\ast,\beta^\ast,\mathbf{d}_p,\mathbf{d}_c)\simeq2-2S_p^2-S_c^2+2S_pS_ck.
    \label{eq: l1_approx}
\end{equation}

With similar derivation, we also obtain the form for $L_2$: 
\begin{equation}
    L_2(\alpha^\ast,\beta^\ast,\mathbf{d}_p,\mathbf{d}_c)\simeq1-S_p^2-S_c^2+2S_pS_ck.
    \label{eq: l2_approx}
\end{equation}

We now compare the landscape of $L_1$ (Equation \ref{eq: l1_approx}) and $L_2$ (Equation \ref{eq: l2_approx}) when $S_p, S_c \in [0, 1]$ when $k$ is small ($k=0.05$): Figure \ref{fig:loss_landscape}.

\begin{figure}
    \centering
    \includegraphics[width=0.8\linewidth]{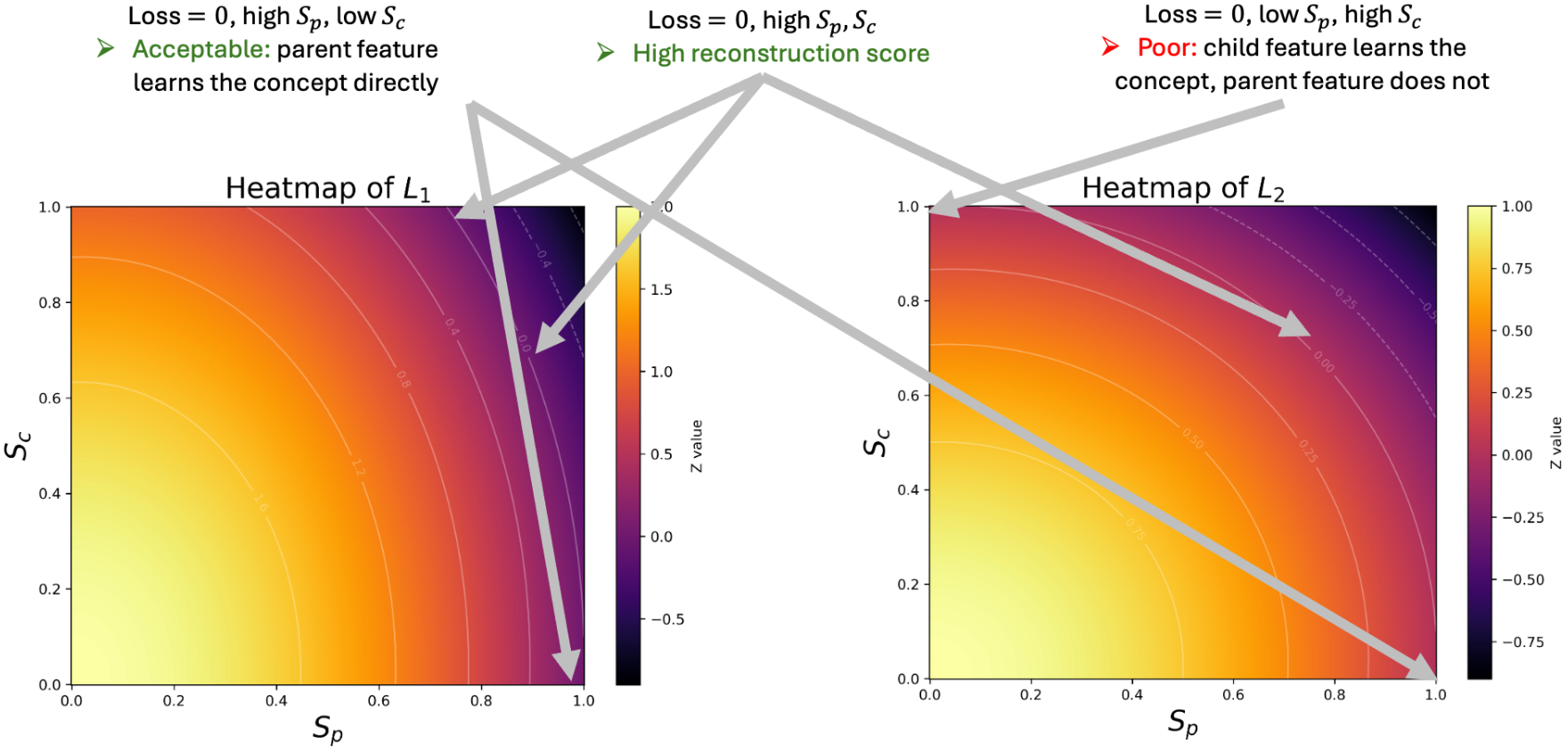}
    \caption{This figure plot the landscape of $L_1,L_2$ when $S_p, S_c \in [0,1]$. \textbf{Left:} $L_1$ leads to either high reconstruction score, or the parent feature learns the child concept directly. \textbf{Right:} $L_2$ can lead to high $S_c$ and low $S_p$ (similar to examples in Figure \ref{fig: non dense feature evidence}), the parent feature does not reconstruct the child concept. Note that $S_p, S_c$ cannot be 1 at the same time (and the loss will be negative), because this means $\mathbf{d}_c=\mathbf{d}_p=\mathbf{d}_c^\ast\rightarrow \mathbf{d}_p^T \mathbf{d}_c=1$, which is impossible due to our loss.}
    \label{fig:loss_landscape}
\end{figure}

The results show that optimize using $L_1$ can avoid case where the parent feature does not reconstruct the child concept (examples in Figure 1), while the $L_2$ cannot. This directly support our proposition.

\textbf{Interpretation:} This result means that enforcing multi-level reconstruction loss for feature pairs that strictly follow activation coverage (enforced via our tree structure) directly leads to high reconstruction score.

\textbf{Q2: Why child encoder has low cosine similarity with parent feature decoder?:}

\textbf{Propostion 2:} $\mathbf{e}_c^T\mathbf{d}_p\simeq0$. This also shows that we should not use $\mathbf{e}_c$ in our reconstruction score because it cannot separate correct and incorrect pairs.

\textbf{Proof:} We provide proof for $L_1$, proof for $L_2$ is then straightforward. At convergence, $$\partial L_1/\partial \mathbf{e}_p=(4\mathbf{e}_p^T\mathbf{d}_c^\ast-4\mathbf{d}_p^T\mathbf{d}_c^\ast+2k\mathbf{e}_c^T\mathbf{d}_c^\ast)\mathbf{d}_c^\ast=0,$$ and $$\partial L_1/\partial \mathbf{e}_c=(2\mathbf{e}_c^T\mathbf{d}_c^\ast-2\mathbf{d}_c^T\mathbf{d}_c^\ast+2k\mathbf{e}_p^T\mathbf{d}_c^\ast)\mathbf{d}_c^\ast=0.$$ 
Solving these and remove $k^2$, we have $\mathbf{e}_p\simeq \mathbf{d}_p-k\mathbf{d}_c/2$ and $\mathbf{e}_c \simeq \mathbf{d}_c-k\mathbf{d}_p$. Therefore $\mathbf{e}_c^T\mathbf{d}_p\simeq0$. This completes the proof.

\textbf{Interpretation:} The child encoder is not a good approximation of the true concept vector $\mathbf{d}_c^*$ as it avoids aligning with parent feature caused by the default reconstruction loss optimization. This leads to the low cosine similarity between parent encoder the parent feature decoder vector, and we can not use the child encoder as the true direction of the child concept to differentiate between correct and incorrect hierarchical pairs. 

\textbf{Q: Additional Experiment showing that $|k|=|\mathbf{d}_p^T\mathbf{d}_c|$ is small:}

We collect every parent and child pairs that strictly follow activation coverage $S_{cov}=1$ and measure $|\mathbf{d}_p^T\mathbf{d}_c|$. We conduct on Tree SAE $L_0=32$-2 layers and TopK SAE $L_0=32$ in Table \ref{tab:measure k}:

\begin{table}[h]
    \centering
    \caption{Table shows the value $|k|=|\mathbf{d}_p^T\mathbf{d}_c| \ll 1$ is small.}
    \begin{tabular}{l|c|c}
        \toprule
        SAE & Tree SAE $L_0=32$-2 layers & TopK SAE \\
        \midrule
        $|k|$ & 0.044 & 0.060 \\
        \bottomrule
    \end{tabular}
    \label{tab:measure k}
\end{table}

This supports our claim that $|k|=|\mathbf{d}_p^T\mathbf{d}_c| \ll 1$.

% Q3.

% What happens if we use mean difference?:

% When we consider mean difference of the child concept as the ground truth, we observe that both $\mathbf{d}_p$ and $\mathbf{d}_c$ yields low value (~0.04) for all SAEs and cannot differentiate between correct and incorrect hierarhical pairs.

\section{Choice of MCS}
\label{sec: choice of MCS}
Since MCS can have multiple setups, in this section, we toggle the setup to find the best version to benchmark on our main experiment Section \ref{exp: hierarchy}. As described in \cite{matryoshka}, the MCS measure the correlation between the activation of the candidate parent and child feature, specifically in cases where the child feature activates. We can either scale the score by maximum activation of the parent and child to remove spurious activation \cite{matryoshka} or compute the correlation only \cite{monosemanticity}. We propose another setup that, instead of measuring the correlation, we treat the activation of a feature as binary (1 for non-zero activation and 0 for no activation) and measure the correlation on the binary vectors. Thus, we test 4 possible versions (whether to use binary and whether to use scaling) to test the score on our hierarchical metric (Section \ref{exp: hierarchy}. We run the hierarchy experiment on Tree SAEs for all specificity levels. The result in Figure \ref{fig: test_mcs} shows that the best setup is non-scaling-binary, while scaling-correlation yields the worst performance consistently across all sparsity. Therefore, we use the non-scaling-binary for all of the MCS algorithms in our main experiment. Note that the best version is also similar to the definition of \textit{activation coverage} condition in Section \ref{exp: hierarchy}.

\begin{figure}
\begin{center}
\includegraphics[width=0.7\textwidth]{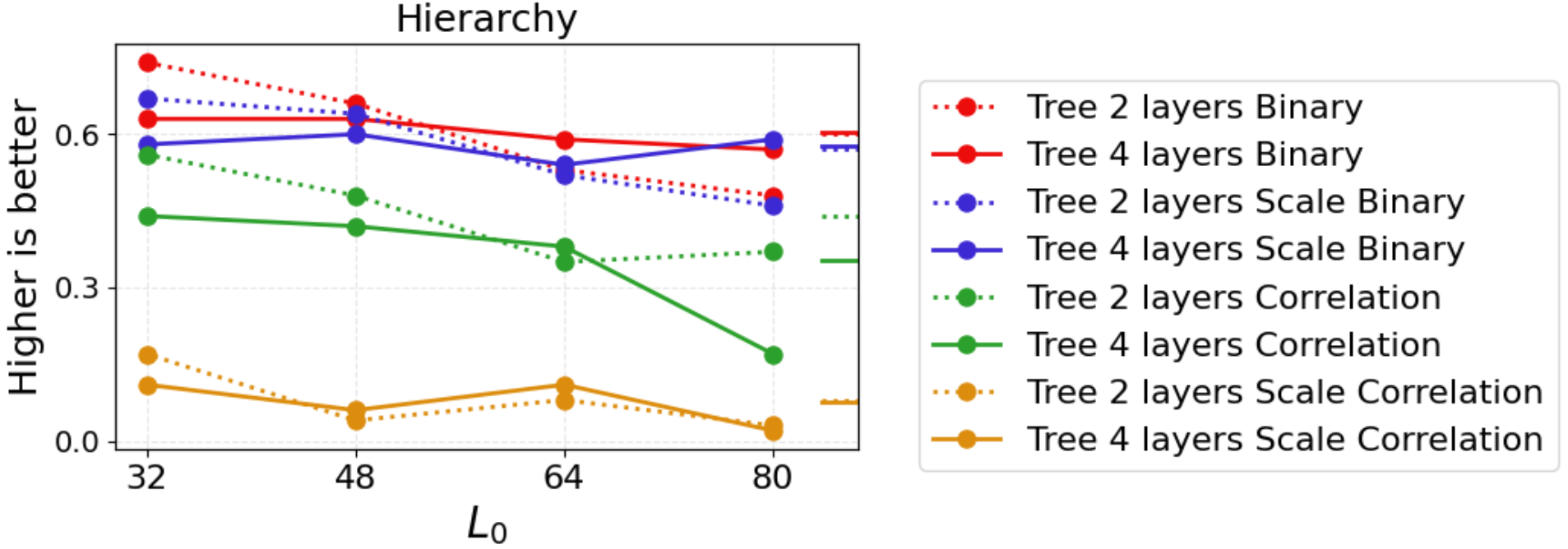} 
\end{center}
\caption{Hierarchy results for Tree SAE on 4 different setups of MCS. The non-scaling-binary consistently outperforms other setups, while scaling-correlation has the worst performance.} 
\label{fig: test_mcs}
\end{figure}

\section{Additional Qualitative Observations}
\label{sec: additional qualitative observation}
We show additional tree structure from our Tree SAEs in Figure \ref{fig: interpret tree structure3} and \ref{fig: interpret tree structure4}. Our results show that Tree SAE structures break down general concepts into interpretable subcases, showing that it can learn both generalised and 

specialised features, avoiding feature pathologies such as feature splitting or absorption.

\begin{figure}[tbh]
    \centering
    \begin{minipage}{.48\textwidth}
        \includegraphics[width=1\columnwidth]{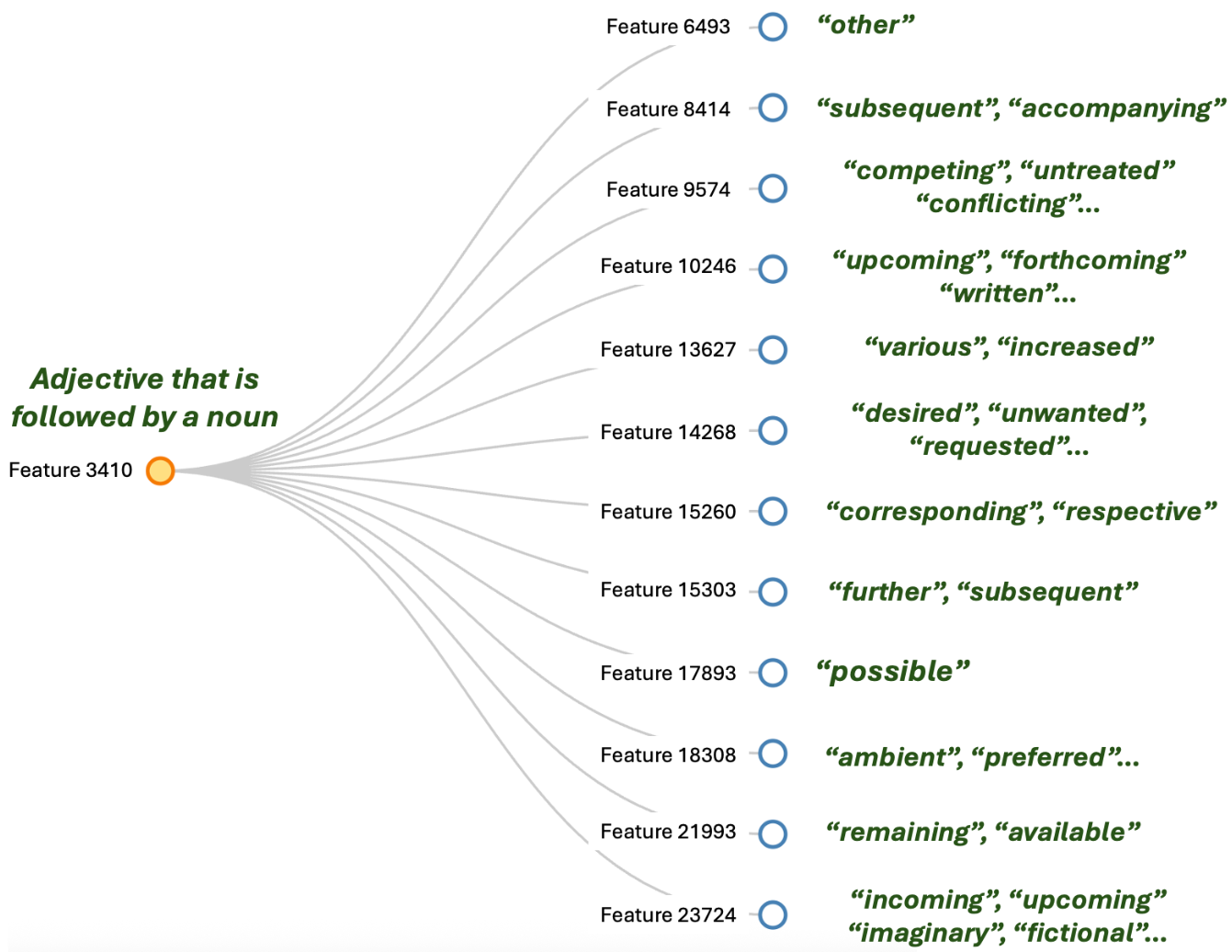}
        \vspace{-4mm}
        \caption{Qualitative observation on the tree structure of root feature 3410 of 2-layer Tree SAE at $L_0=48$. 
        % The features in the tree break down the general polysemantic concept into monosemantic concepts.
        }
    \label{fig: interpret tree structure1}   
    \end{minipage}
    \hfill
    \begin{minipage}{.50\textwidth}
        \vspace{3mm}
        \includegraphics[width=1\columnwidth]{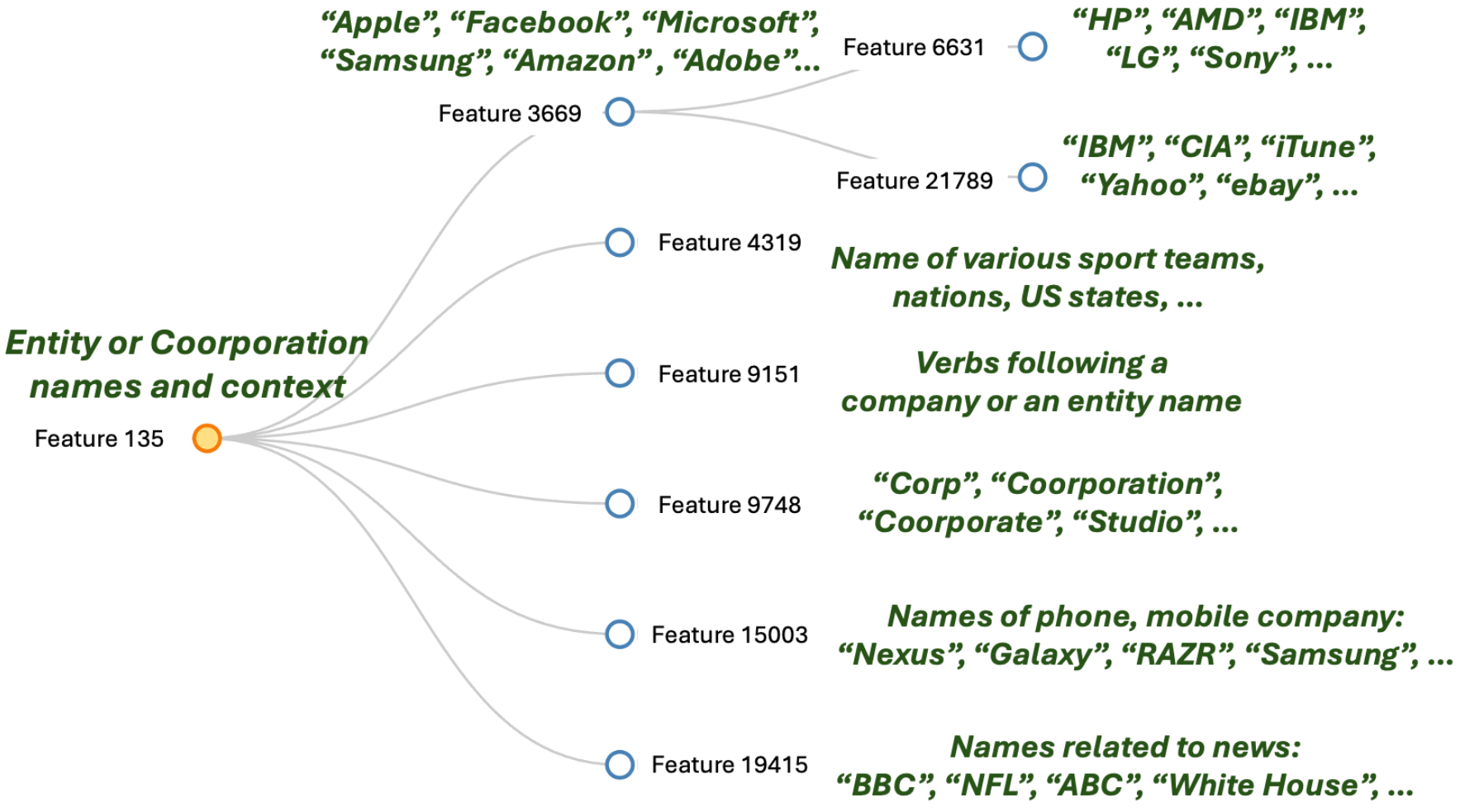} 
        \vspace{7mm}
        \caption{Qualitative observation on the tree structure of root feature 135 of 4 layers Tree SAE at $L_0=80$. 
        % The features in the tree break down the general polysemantic concept into monosemantic concepts.
        }
    \label{fig: interpret tree structure2}
    \end{minipage}
\end{figure}

\begin{figure}[tbh]
    \centering
    \begin{minipage}{.53\textwidth}
        \includegraphics[width=1\columnwidth]{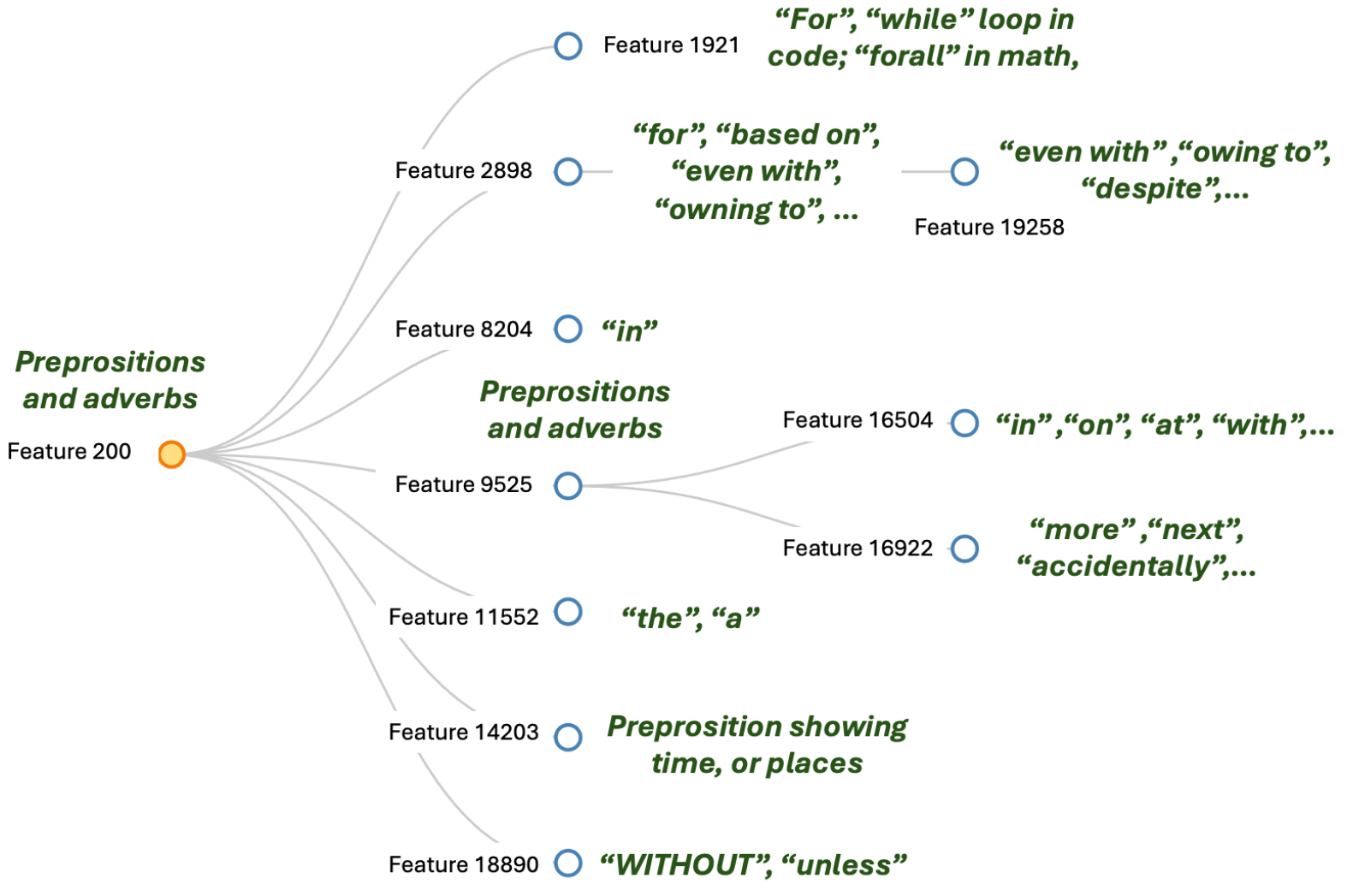}
        \vspace{3mm}
        \caption{Qualitative observation on the tree structure of root feature 200 of 4 layers Tree SAE at $L_0=64$.}
    \label{fig: interpret tree structure3}   
    \end{minipage}
    \hfill
    \begin{minipage}{.45\textwidth}
        \includegraphics[width=1\columnwidth]{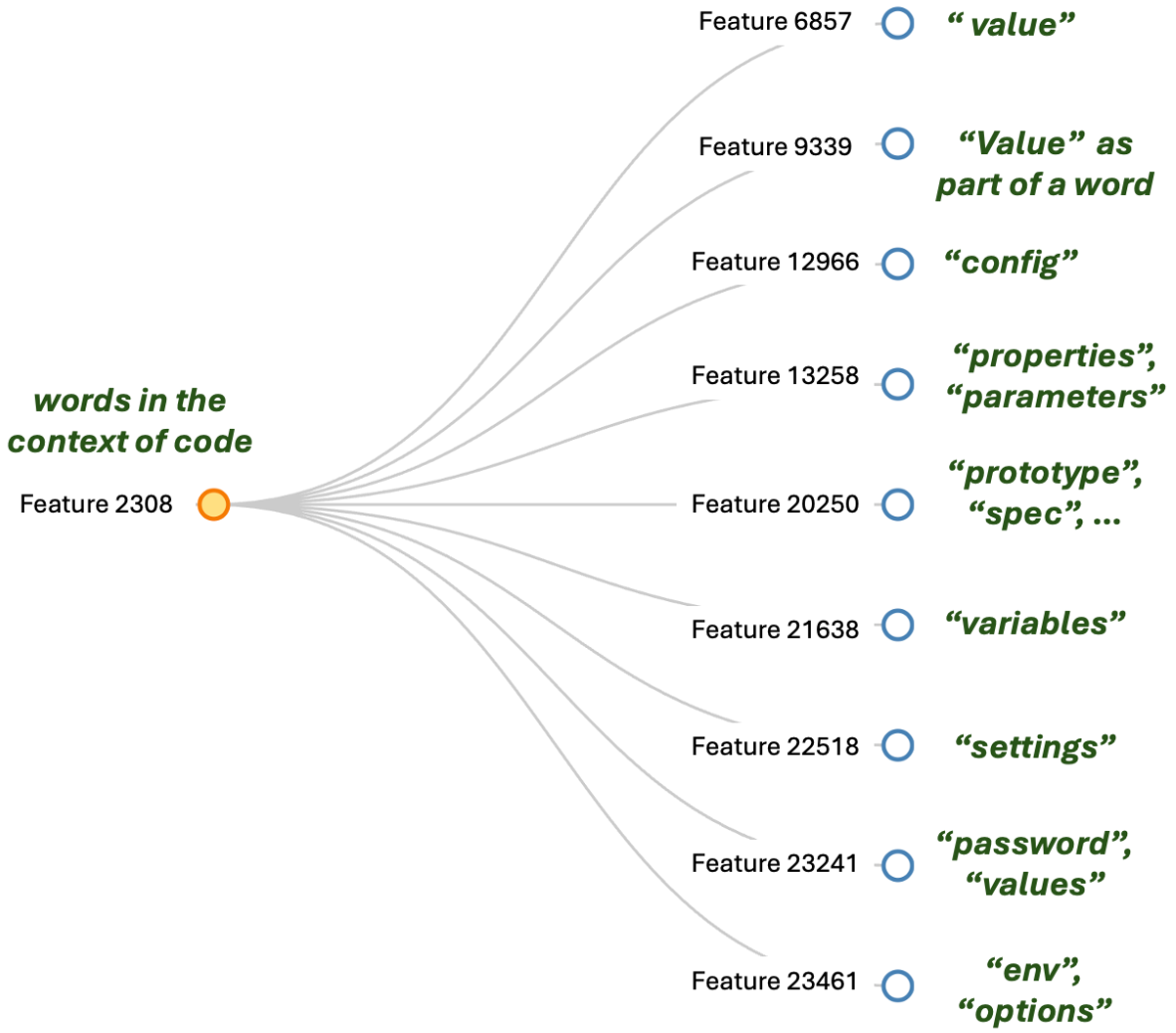} 
        \vspace{-3mm}
        \caption{Qualitative observation on the tree structure of root feature 2308 of 2 layers Tree SAE at $L_0=32$.}
    \label{fig: interpret tree structure4}
    \end{minipage}
\end{figure}